\newcolumntype{L}[1]{>{\raggedright\let\newline\\\arraybackslash\hspace{0pt}}m{#1}}
\newcolumntype{C}[1]{>{\centering\let\newline\\\arraybackslash\hspace{0pt}}m{#1}}
\newcolumntype{R}[1]{>{\raggedleft\let\newline\\\arraybackslash\hspace{0pt}}m{#1}}
\newcommand{\hide}[1]{}
\newtheorem{definition}{Definition}
\newtheorem{theorem}{Theorem}
\newtheorem{corollary}{Corollary}
\newtheorem{lemma}{Lemma}
\newtheorem{example}{Example}
\newtheorem{proof}{proof}
\title{Cluster-based trajectory segmentation with local noise}
\author[1]{Maria Luisa Damiani\footnote{Corresponding author. E-mail: maria.damiani@unimi.it}}
\author[1]{Fatima Hachem}
\author[1]{Issa Hamza}
\author[2,3]{Nathan Ranc}
\author[3]{Paul Moorcroft}
\author[2,3]{Francesca Cagnacci}
\affil[1]{Dept. Computer Science, Universit\'a degli Studi di Milano, Italy}
\affil[2]{Dept. Biodiversity and Molecular Ecology, Fondazione E. Mach, Italy}
\affil[3]{Dept. Organismic and Evolutionary Biology, Harvard University, USA }
\date{}
\begin{document}
\maketitle

\begin{abstract}
	
We present a framework for the partitioning of a spatial trajectory in a sequence of segments based on spatial density and temporal criteria. The result is a set of temporally separated clusters interleaved by sub-sequences of unclustered points. 
A major novelty 
is the proposal of an outlier or \emph{noise} model based on the distinction between intra-cluster (\emph{local noise}) and inter-cluster noise (\emph{transition}): the local noise models the temporary absence from a residence while the transition the definitive departure towards a next residence. 
We  analyze in detail the properties of the model and present a  comprehensive solution for the extraction of temporally ordered clusters. The  effectiveness of the solution is evaluated first qualitatively and next quantitatively by contrasting the segmentation with ground truth. The ground truth consists of a set of trajectories of labeled points simulating animal movement. 
Moreover, we show that the approach can streamline the discovery of additional \emph{derived} patterns, by presenting a novel technique for the analysis of periodic movement.
From a methodological perspective, 
a valuable aspect of this research 
is that it combines the theoretical investigation  with the application and external validation of the segmentation framework. This paves the way to an effective deployment of the solution in broad and challenging fields such as e-science. \\
\\
\textbf{Keywords }{Mobility data analysis, Trajectories, Segmentation, Clustering} 
\end{abstract}

\section{Introduction}

Recent years have witnessed a tremendous growth in the collection of trajectory data  
and trajectory data analysis has become a prominent research stream with important applications in e.g. urban computing, intelligent transportation,  animal ecology \citep{Gian2008,Zheng2014,Zheng2015,Parent2013,tsas2015}. 
Spatial trajectories, in particular (simply trajectories hereinafter), are sequences of temporally correlated observations describing the movement of  an object through  a series of points sampling the time-varying location of the object \citep{zheng11}.  An example  is shown in Figure \ref{initt0}. 
\begin{figure}[H] 
	\centering
		\includegraphics[height=4cm]{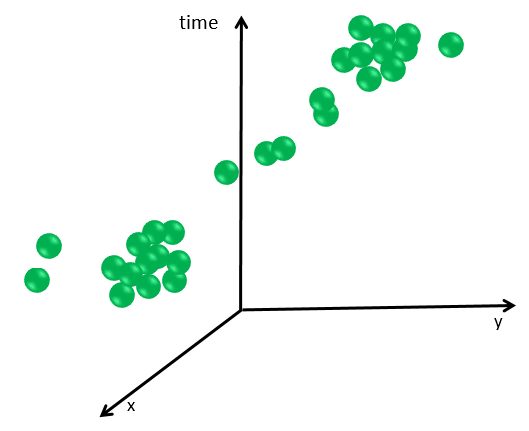}	
	\caption{ A trajectory in the spatio-temporal coordinate system.} 
	\label{initt0}
\end{figure}
For the sake of generality, we do not make any stringent assumption on location sampling frequency and regularity, even on the movement characteristics such as
speed, heading and so forth. The only loose  assumption is that the  temporal distance between two consecutive points is relatively small (with respect to the mobility phenomenon under consideration). 
In this view, the trajectory is simply a sequence of relatively close in time  ordered points. 

A major analysis task over trajectories is trajectory segmentation. Generally speaking, the  segmentation task splits a sequence of data points, in a series of disjoint sub-sequences consisting of points that are homogeneous with respect to some criteria \citep{Keogh2001-a,Aronov2015}.  Diverse segmentation criteria have been proposed in literature, even for different purposes, including time series summarization, e.g. \citep{Keogh2001-a,Esling2012}, and trajectory indexing in databases, e.g. \citep{Rasetic2005,Traj2010}.  In this paper we focus on a slightly different problem, that is splitting  a trajectory in  a-priori unknown number of segments based on spatial density and temporal criteria.   We refer to this problem as cluster-based segmentation.
The cluster-based segmentation problem can be  introduced as follows. Consider a trajectory $T$ of temporally ordered points in a generic metric space. e.g. the Euclidean space, $T=(p_1, t_1),..,(p_n, t_n)$ with $t_i$  a time instant and $p_i$ a point of space, and consider a segmentation for $T$ consisting of a set of temporally ordered sub-sequences, denoted $S_1 < S_2..< S_k$, covering the trajectory, i.e., $T=\bigcup_{i \in[1,k]}{S_i}$.
We say that such segmentation is cluster-based 
if the following conditions hold:
\begin{itemize}
	\item  A subset of segments are clusters representing spatially dense sets of points. 
	Such clusters are thus  \emph{temporally separated}.  
	
	\item The points temporally lying  between two consecutive clusters (or one cluster and the begin/end of the trajectory) 
	are points that cannot be added to any cluster. Such points form a segment called  \emph{transition}.
	
\end{itemize}
The sequence of alternating clusters and transitions represents the trajectory segmentation \emph{induced} by the clustering technique.\\
\\
\textbf{Applications and requirements.} A major application of cluster-based segmentation is to detect \emph{stop-and-move} patterns \citep{Parent2013}. Stop-and-move is an \emph{individual}  pattern \citep{Dodge08}
typically describing the behavior of an object 
that resides in a region of space for some time, i.e. a stop or \emph{residence}, 
and then moves to some other region, in a continuous flow from stop to stop.  This behavior can be exhibited at different temporal scales. For example, Figure \ref{app}.(a) illustrates the trajectory of an animal that during the seasonal migration moves from one \emph{home-range} to another home-range \citep{Damiani:2014}. At a different time scale, Figure \ref{app}.(b) shows 
the wandering of the eye gaze stopping in \emph{fixations}  
during the observation of a scene \citep{Cerf2008}. 
Intuitively, the stop-and-move pattern is exhibited by those phenomena that alternate periods of relative stability to periods of instability.
\begin{figure}[H] 
	\centering
	\subfigure[]
	{\includegraphics[height=4.2cm]{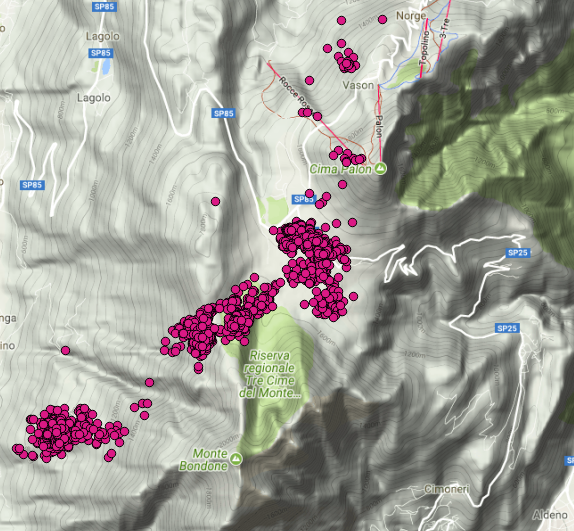}}
	\subfigure[]
	{\includegraphics[height=4.2cm]{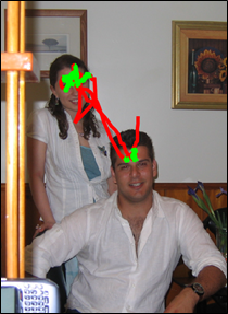}}	
	\caption{Stop-and-move pattern: (a) the seasonal migration of a roe-deer  tracked for over 1 year; (b) the trace of the eye gaze \protect{\citep{Cerf2008}}.} 
	\label{app}
\end{figure}

An interesting question, related to the concept of cluster-based segmentation, 
concerns the meaning of \emph{noise}  in such a setting.   
Broadly speaking, noise consists of data points that  do not fit into the clustering structure and that for such a reason can be considered as diverging 
\citep{Dbscan1996,Han2011}. 
In our case - if we rule out possible errors during the data collection phase - we can see that there are two classes of unclustered points: the points representing a transition and the points 
that do not belong to either a cluster or a transition. In the latter case, the  noise indicates, in essence,  a temporary departure or \emph{absence} from the cluster. 
For example,  this notion of temporary absence can be exemplified by an individual leaving the area where he/she resides, for example for a travel, and then returning back after a period of time. Note that such an absence cannot be qualified as a transition because the individual in reality does not change residence but simply leaves it for a period. 
Put in different terms, absence points represent a form of noise that is somehow local to clusters in contrast to transition, which represents an inter-cluster noise. To emphasize this characteristic, we will refer to this form of noise as \emph{local noise}. Figure \ref{initt0-1} highlights the key difference between  cluster, transition and local noise.
Local noise can have an application-dependent meaning. 
For example, in the field of animal ecology, biologists use the term 'excursion' to characterize the temporary absence from the home-range where the animals reside \citep{Damiani2016}. 
In summary, the local noise  represents a semantically meaningful ingredient of many dynamic phenomena and as such cannot be neglected. \\
\begin{figure}[H] 
	\centering
	\includegraphics[height=4cm]{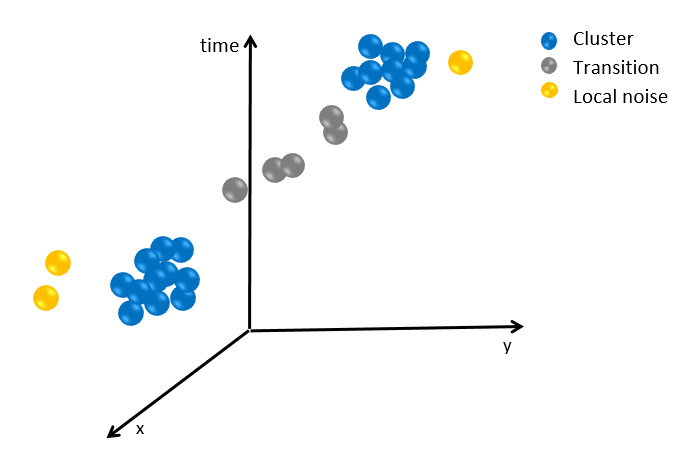}
	\caption{Trajectory segmentation: the trajectory is split into two clusters with a few local noise points, and one transition } 
	\label{initt0-1}
\end{figure}

\noindent
\textbf{Approach.} 
Conventional clustering and segmentation techniques, if taken alone, present intrinsic limitations that make them unsuitable for the cluster-based segmentation problem.   
For example, time-aware clustering methods  
such as ST-DBSCAN \citep{Birant2007} and DensStream \citep{Cao2006} for the clustering of spatio-temporal events and stream data respectively,  do not guarantee the temporal ordering of clusters. 
In particular, ST-DBSCAN 
groups points that are close both in time and space, while DensStream assigns data points a weight based on their temporal freshness to determine whether a group of points is actually a cluster. In both cases, however,   
the clusters may be not temporally ordered. 
By contrast, trajectory segmentation techniques, e.g. \citep{Aronov2015,Kang2004,Buchin2013}, while generating temporally ordered segments, fall short in handling noise. 
On the other hand, simple solutions, such as  introducing a tolerance on the number of  noise points inside a segment, as in \citep{Kang2004,Buchin2013}, are problematic and do not provide guarantees. 

In this research, we investigate a robust solution to the problem of cluster-based segmentation of trajectories  with local noise. 
It is worth noting that in case of no local noise -  the individual simply moves from one residence to another residence - the cluster-based segmentation is fairly straightforward. It  is sufficient to aggregate the points of the sequence in clusters using a  DBSCAN-like technique \citep{Dbscan1996}. Next, once a cluster is created, the  first point in the sequence that cannot be added to such a cluster determines the break of the segment. 
The problem with this solution is that whenever the unclustered points do not have an univocal interpretation, i.e., can represent both  a temporary absence and the definitive departure from the current cluster, such  points cannot be correctly classified until the actual destination or, put in other form, the object's behavior is not known. A different approach is thus needed. 
%
The research presented in this article significantly extends  %
the earlier proposal presented in \citep{Damiani:2014}.
Such proposal comprises the \emph{SeqScan} 
cluster-based segmentation  model, centered on the notion of \emph{presence/absence} in/from the cluster, and the  segmentation algorithm. 
Two key questions remain, however, open: 
how to provide guarantees about the types of patterns that can be discovered; and how to prove the effectiveness of the solution. In this article we address both questions through an in-depth analysis of the properties of the segmentation model and 
the study of suitable evaluation methods. 
Moreover, to highlight the application potential of the technique, we investigate how the framework can be used to facilitate the discovery of additional mobility patterns, which we call \emph{derived}, such as recursive movement patterns \citep{Dataminingbook2014}. Specifically, we propose a technique for location periodicity detection.
To our knowledge this is the first comprehensive framework offering a robust solution to the 
cluster-based trajectory segmentation problem with noise. In summary, the main contributions are as follows. \\

\noindent
\textbf{Contributions.} 
\begin{itemize} 
	\item We provide a rigorous specification of  the \emph{SeqScan} framework, 
	which consolidates the earlier version \citep{Damiani:2014}.  In particular,  we  
	introduce and analyze the property of \emph{spatial separation} of clusters. 
	While such a property is given for grant in classical clustering, it requires a specific characterization in the mobile context where the movement can be recursive, i.e. the same region can be repeatedly visited at different times. As a result, we identify three different mobility patterns, depending on whether clusters are \emph{strongly separated, weakly separated} or \emph{overlapping}. Moreover, we show that the segmentation algorithm splits the trajectory in pairwise separated or weakly separated clusters. In essence, weakly separated clusters describe the circular movement between two consecutive clusters.
	\item We study analytically the relationship between the number of clusters and the temporal parameter - the \emph{presence} -  in order to facilitate the choice of the presence threshold during the segmentation task.  
	\item We evaluate the effectiveness of the segmentation technique 
	by contrasting the SeqScan segmentation with  ground truth. The ground truth consists of a set of labeled trajectories simulating the movement of animals. The trajectory generator is developed by the ecologists co-authoring this article. The evaluation is then conducted 
	through blind experiments.
	\item Finally, we propose a novel technique for the discovery of periodically visited locations, which leverages \emph{SeqScan} and the novel concept of cluster spatial similarity. We contrast our solution with state-of-the-art methods, using real data. The approach is shown to be  effective, simpler to use, and more informative than state-of-the-art methods even in case of periodicity with noise. 
\end{itemize}
The remainder of the article proceeds as follows: Section 2 overviews related research;  Section 3 presents the clustering-based segmentation model and shows the key properties of the model that are at the basis of the \emph{SeqScan} algorithm presented in Section 4 along with the temporal parameter analysis.  Section 5 presents the novel technique for the discovering of periodically visited locations (the \emph{derived} pattern).  The experimental evaluation of both \emph{SeqScan} and the derived pattern discovery technique is presented in Section 6. We conclude with a discussion in Section 7 and final remarks in Section 8.

\section{Related work}
We focus on two
major streams of related research 
concerning the segmentation of trajectories and the detection of stop-and-move patterns, respectively.\\
\\
\textbf{Trajectory segmentation.}
Closely related to our work is the area of  computational movement analysis,
a relatively recent  stream of research,  rooted in  computational geometry and geographical information science, and primarily focused on the concept of \emph{movement pattern}, e.g.  
\citep{Laube08,Dodge08,Buchin2011,Buchin2013,Buchin2014,Aronov2015}. 
Movement patterns describe stereotypical mobility behaviors of individuals located in a geographical space, 
typically characterized in terms of movement attributes or \emph{characteristics}, such as  speed, velocity, direction change rate \citep{Laube08,Dodge08}. 
The segmentation problem is defined as follows: to split a high-sampling rate trajectory into a minimum number of segments such that the movement characteristic
inside each segment is uniform in some sense  \citep{Buchin2011}. 
\citet{Buchin2011,Buchin2013} address the problem of splitting a trajectory based on segmentation criteria that  are \emph{monotone} (decreasing), namely the conditions on the movement characteristics are satisfied by all of the points of the segment. An example is a range condition over speed. 
This approach, however, suffers from a major limitation, in that the criteria of practical interest are  often non-monotone. For example the property regarding  the density in space of the points in a segment is  non-monotone. 

A more flexible framework is presented in \citet{Buchin2014}, which introduces the notion of \emph{stable} criteria, that is criteria  that do not change their validity 'very often'. Such framework can handle Boolean combinations of increasing and decreasing criteria, where \emph{increasing} means that whenever the condition is satisfied by a certain segment it is satisfied also by the segments that contain that segment.
An example combining increasing and decreasing criteria is staying in an area for a minimum duration. Moreover, segmentation tolerates a certain amount of noise, in that a condition can be satisfied except for a fraction of points. Our model differs from this solution in several aspects: the problem is not formulated as an optimization problem, 
moreover we do not seek to provide a generalized segmentation framework. Rather we focus on a specific non-monotone property, density, which cannot be straightforwardly expressed using the aforementioned criteria.  Moreover, we handle noise through the notion of presence/absence, and not counting the points. This solution not only makes the technique more usable (i.e., the presence parameter may have an application meaning), but also makes it much more robust and flexible in case of trajectories with missing points or varying sampling rate.
A different and challenging direction is explored in \citep{Aronov2015}. 
The problem of computing the minimal set of segments based on non-monotonic criteria is shown to be NP-hard in a continuous setting, i.e.,  where trajectories are interpolated linearly between data points and  segments can start and end between data points. 
Though, two  specific criteria are shown to satisfy
properties that make the segmentation problem tractable. Interestingly, both these criteria are related to noise.
In particular, one criterion requires that the minimum
and maximum values of the given attribute
on each segment differ at most for a given amount, while
allowing a certain percentage of outliers. The second
criterion requires the standard
deviation of the attribute value in the segment not to exceed a  threshold value.
For the same reasons discussed above, our goal is substantially different, moreover our solution can be utilized in both discrete and continuous settings.

Segmentation methods are also investigated in other scientific domains,
especially in animal ecology, for the detection of activities or \emph{behavioral states}, e.g.,
foraging, exploration or resting  \citep{Ecology2016}. Such methods rely
on: movement characteristics analysis (in the sense
discussed above), time-series analysis, e.g., \citep{Gurarie2009} and
state-space models, such as hidden Markov models, e.g., \citep{Michelot2017}.
%
There is, however, a substantial difference between these methods and our proposal. Firstly, 
our solution does not target activity recognition. Rather, the goal is movement summarization in presence of noise. 
Secondly, our technique applies to sequences of  temporally annotated data points defined in a metric space, thus the solution is not confined to the analysis of the physical movement in an Euclidean space. 
In this sense, the solution gains in generality and can be employed in arbitrary spaces. \\ 
\\
\textbf{Stop-and-move  detection.} 
A number of techniques for the detection of stops and places  have become  quite popular in recent times \citep{Damiani2017}.
A pioneering technique is the CB-SMoT algorithm \citep{Palma2008}. 
Similarly to DBSCAN, CB-SMoT relies on the notion of $\epsilon$-neighborhood. The $\epsilon$-neighborhood of a point $p$ is however defined along the piecewise linear representation of the spatial trajectory.  It is thus a sub-trajectory consisting of all the points whose distance from $p$ along the line is at most $\epsilon$. Moreover, the parameter  specifying the number of points located in the $\epsilon$-neighborhood, is replaced by the parameter \emph{MinTime} specifying the minimal duration  of the $\epsilon$-neighborhood. %
The substantial limitation of this approach is that it  resembles DBSCAN without in reality  preserving the actual spirit; in particular, this technique is sensitive to noise, because the first point in the sequence that does not belong to the current cluster determines a breakpoint in the sequence. %
Indeed, noise sensitivity is a common issue to all of these techniques, despite the attempts to overcome the problem.  
For example, the  algorithm presented in  \citep{Kang2004}
tolerates up to a maximum number of noise points between two consecutive points in the same cluster. As the noise points exceed this bound, a breakpoint  is added.    
Clearly if the number of noise points is variable or the sampling rate is not regular, this expedient falls short.
Another technique is proposed by Zheng et al. \citep{Zheng2011} as part of a location recommendation system. This solution  is even more restrictive: the first point that is sufficiently far from the beginning of the segment determines a breakpoint. 
Hence if the duration of such a segment  is too brief with respect to the given threshold value, all the points of the segment become noise. %
A common feature of all of these techniques is that they do not offer theoretical guarantees,  are  defined for narrow domains, and lack systematic validation on field. On a different front, the MoveMine project \citep{Movemine2011} presents a technique for the extraction of the periodic movement of an object moving across \emph{reference spots}, where the reference spot is basically defined as a dense region \citep{Movemine2011}. For the detection of reference spots a popular kernel method \citep{Worton1989} designed for the purpose of finding home ranges of animals is used. The notion of reference spot is, however, static and  does not consider time, thus  reference spots do not have a temporal granularity, which instead is one of the  qualifying features of our model. We will come back to the  MoveMine approach, later on in the paper.

\section{ The cluster-based segmentation model}

This section introduces the cluster-based segmentation model. We review the basic concepts and discuss the key properties that are at the basis of the algorithm presented in the next section. Preliminarily, we briefly review  
the DBSCAN cluster model \citep{Dbscan1996}, which provides the ground for the proposed framework and introduce the basic terminology.  
\subsection{Preliminaries and notation}

Consider a database $P$ of points in a metric space. Let $d_s(.)$ be the distance function, e.g. the Euclidean distance, and $\epsilon \in \mathbb{R}$ (i.e. the distance threshold), and $K \in \mathbb{N}$ (i.e. the minimum number of points in a cluster)  the input parameters. 
The cluster model is built on the following definitions \citep{Dbscan1996}:

\begin{definition}[DBSCAN model]
	\emph{
	\begin{itemize}
		\item [-] The $\epsilon$-\emph{neighborhood} of $p\in P$, denoted $N_\epsilon (p)$, is the subset of points that are within distance $\epsilon$ from  $p$, i.e. $N_\epsilon (p)= \{p_i  \in P,  d_s(p, p_i) \leq \epsilon\}$.
		\item [-] A point $p$ is \emph{core point} if its $\epsilon$-neighborhood contains at least $K$ points, i.e. $|N_\epsilon (p)| \geq K$. A point that is not a core point but belongs to the neighborhood of a core point is a \emph{border point}. 
		\item [-]A  point $p$  is \emph{directly density-reachable} from $q$ if $q$ is a core point and $p \in N_\epsilon (q)$. 
		\item [-] Two points $p$ and $q$ are \emph{density reachable} if there is a chain of points $p_1,..,p_n$, $p_1=p, p_n=q$ such that $p_{i+1}$ is directly reachable from $p_{i}$.
		\item [-] Points $p$ and $q$ are  \emph{density connected} if there exists a core point $o$ such that both $p$ and $q$ are density-reachable by $o$. 
		\item [-]   A \emph{cluster} C wrt. $\epsilon$ and $K$ is a non-empty subset  of points
		satisfying the following conditions:
		\begin{itemize}
			\item 1) $\forall p, q$: if $p \in$ C and $q$ is density-reachable from $p$, then $q \in$ C (Maximality)
			\item 2) $\forall p, q \in C: p$ is density-connected to $q$ (Connectivity)
		\end{itemize}
		\item [-] A point $p$ is a \emph{noise} if it is neither a core point nor a border point. This
		implies that noise does not belong to any cluster.
	\end{itemize} }
\end{definition}

An example illustrating the DBSCAN concepts is shown in Figure \ref{dbscan}. Apart from a few peculiar situations\footnote{The property  is not  satisfied by border points. Yet, that is marginal for our work}, the result of
DBSCAN is independent of the order in which the points of
the database are visited  \citep{Dbscan1996}.
Therefore, the algorithm cannot detect sequences of clusters based on some ordering relation over points. 

\begin{figure}[H] 
	\center
	\includegraphics[width=6cm]{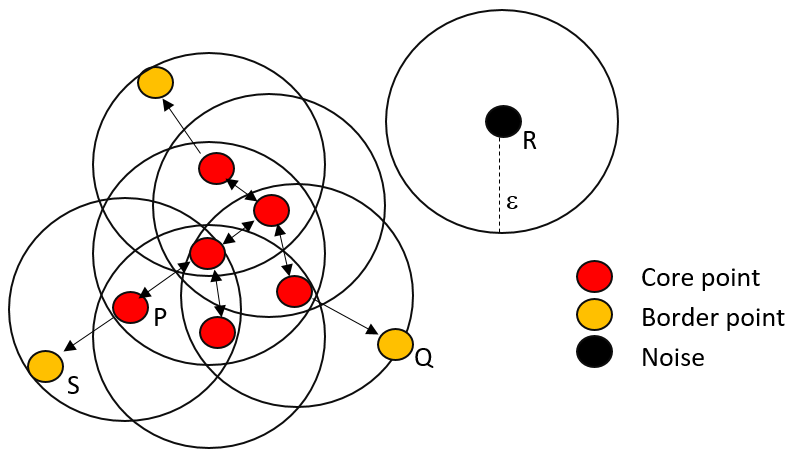}
	\caption{DBSCAN cluster, parameters: $\epsilon,K=4$. $P$ is a core point; $Q$ is a border point because contained in the $\epsilon$-neighborhood of $P$, though not core point;  $Q$ is directly reachable from $P$; $Q$ and $S$ are density connected; $R$ is a noise point.}
	\label{dbscan}
\end{figure} 

Consider now a \emph{trajectory} $T$ of $n$ points $(p_1,t_1),..,(p_n,t_n)$. For the sake of simplicity,  the trajectory is represented by the interval of indices $[1,n]$,  with $i$ indicating the i-esim point $(p_i, t_i)$. 
Besides  the spatial distance $d_s(i,j)$, consider the function $d_t(i,j)$ computing the temporal distance between points $i$ and $j$, respectively. 
A \emph{sub-trajectory} of $T$ is represented by a connected  interval $[i,j] \subseteq T$;  a \emph{segment} by a possibly disconnected interval - a union set of disjoint connected intervals. 
Intuitively, a segment is a sub-trajectory that can have  'holes'.
As shown in Figure \ref{init-notation}, a trajectory is visually represented  as sequence of numbered circles indicating the indexed  points on the plane. The basic notation used throughout the paper is summarized in Table \ref{table:notation}.


\begin{figure}[H]
	\centering
	\subfigure[]
	{\includegraphics[width=5.5cm]{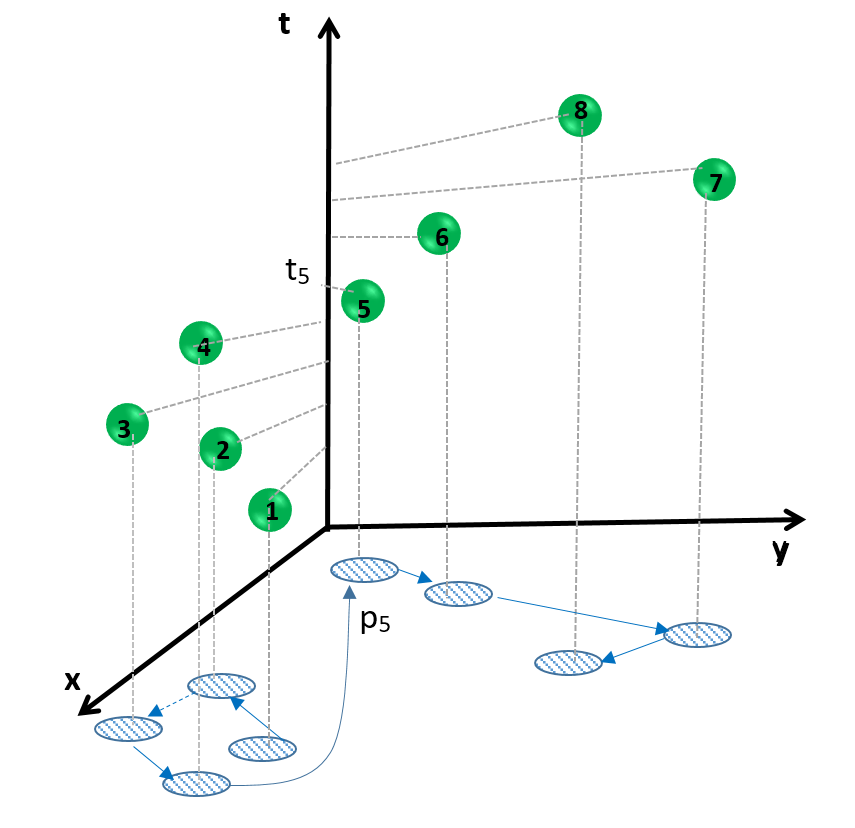}}
	\subfigure[]
	{\includegraphics[width=5.5cm]{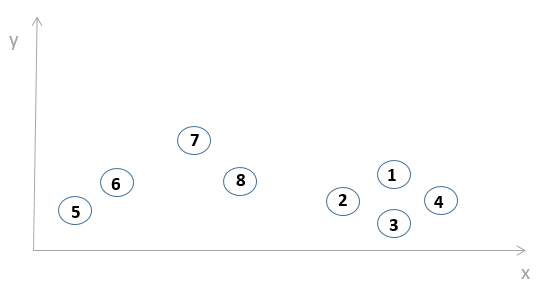}}
	
	\caption{Graphical notation. (a) The trajectory T=[1,8] in the spatio-temporal coordinate system along with the projection on plane and time of the points, i.e. $p_i, t_i$.  The subset $[2,7]$ of T is a sub-trajectory; the 
		subset $[2,3] \cup [5,7]$ a segment. 
		(b) Simplified visual representation used throughout the article.}
	
	\label{init-notation}
\end{figure}

\begin{table}[h]
	\caption{Notation} 
	\centering 
	\small{
		\begin{tabular}{|c | c |} 
			\hline 
			
			\hline 
			I=[i,j] & Sequence of indexed points\\
			$\bigcup_i I_i $ & Segment\\
			$p_j, t_j$ & Spatial point,  temporal annotation \\
			$d_s(i,j)$ & Spatial distance\\
			$d_t(i,j)$ & Temporal distance\\
			K, $\epsilon$ & DBSCAN parameters \\
			$N_{\epsilon}(p)$ & Neighbourhood of point $p$ of radius $\epsilon$\\
			$\delta$ & Presence threshold \\
			$S$ & Cluster/stay region\\
			$\mathcal{P}(S)$ & The value of presence in S \\
			$\mathcal{D}(S)$ & Duration of S\\
			$\mathcal{N}(S)$ & Noise local to  S\\
			$S_i \rightarrow..\rightarrow S_j$ & Sequence of stay regions\\
			$r_j$ & Transition\\
			$|$ & Spatial separation predicate\\
			$\widehat{S}$& Minimal Stay Region in S\\
			
			\hline 
		\end{tabular}
	}
	\label{table:notation} 
\end{table}


\subsection{Cluster and stay region model}
Basically, a stay region is a DBSCAN cluster satisfying a temporal constraint. The key concepts are:

\begin{definition} [Cluster] \emph{
	Given a trajectory $T$,  a cluster $S \subseteq T$ 
	is a segment consisting of  points that, projected on the reference space, constitutes a DBSCAN cluster  (w.r.t. density parameters $\epsilon, K$). Moreover, if the segment is bounded by points $i$ and $j$ then the DBSCAN cluster includes the projection of $i$ and $j$. 
	The set difference  $[i,j] \setminus S$ specifies the corresponding local noise, denoted $\mathcal{N}(S)$. }
\end{definition}

\begin{example}
	\label{ex:stay}
	\emph{
	Consider the trajectory T=[1,7] in Figure \ref{stayregion}.(a) (we omit the coordinate axes for brevity).  If we run DBSCAN on the set of spatial points $p_1,..,p_7$ with parameters K=4 and $\epsilon$ sufficiently small, we obtain that the set $\{p_1,p_2,p_3,p_4, p_7\}$ forms a DBSCAN cluster.  Thus the segment $S=[1,4]\cup[7,7]$ is a cluster in our model, while the points 5 and 6 are local noise. }	
\end{example}

\begin{figure}[H]
	\centering
	\subfigure[]
	{	
		\includegraphics[width=5.5cm]{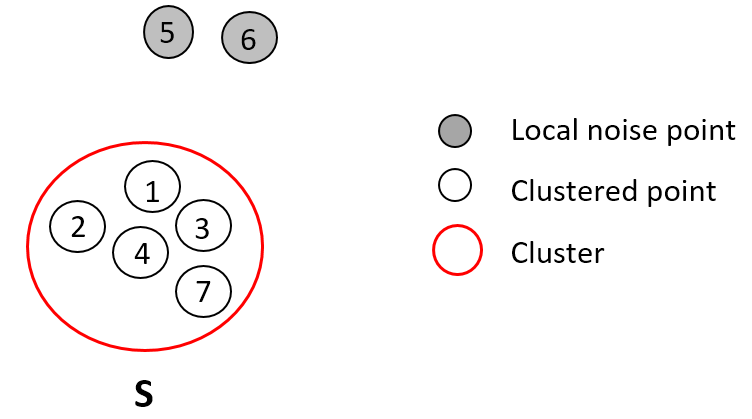}}
	\subfigure[] {
		\includegraphics[width=4.5cm]{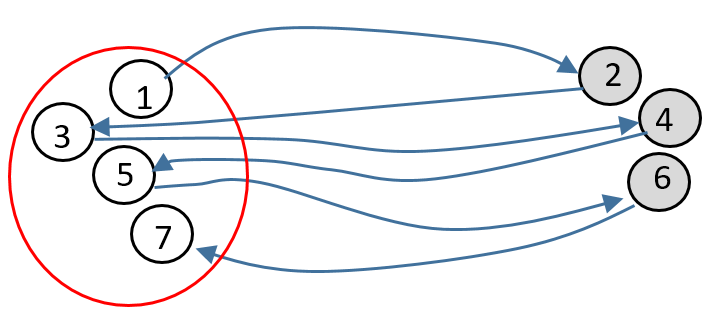}}
	\caption{Graphical notation. (a) The trajectory contains the cluster $S=[1,4]\cup [7,7]$ (open circles) and local noise (grey shaded circles). (b) The object moves back and forth from the cluster. The arrows highlight the flow.}%
	\label{stayregion}
\end{figure}	

A cluster $S$ has a duration $\mathcal{D}(S)$, and a presence $\mathcal{P}(S)$.
The duration $\mathcal{D}(S)$ is simply the temporal distance between the first and last point of the segment, namely $d_t(i,j)$. 
The presence  $\mathcal{P}(S)$ estimates the  residence time in the cluster, with exclusion of the  absence periods, i.e. local noise. Specifically,  the presence of $S$ is defined as the cumulative duration of the connected intervals in $S$. 

\begin{definition}[Presence] \emph{
	Given a cluster $S=S_1\cup..\cup S_m $, with $S_i$ a connected interval,  $\mathcal{P}(S)$ is defined as follows: 
	\begin{equation}
	\label{presence}
	\mathcal{P}(S) =\Sigma_{i=1}^m \mathcal{D}(S_i)
	\end{equation}
	The presence in a cluster $S$ ranges in the interval [0, $\mathcal{D}(S)$].}
\end{definition}

\begin{example}\emph{
	Consider again Figure \ref{stayregion}.(a). Assume for simplicity that the time interval between  consecutive points is 1 time unit. We can see that $\mathcal{P}(S)=3$. The presence in the cluster in Figure \ref{stayregion}.(b) is instead 0 because the object moves back and forth to/from the region  without residing steadily in it.}
\end{example}
This definition of presence relies on the following assumption, that if two consecutive points $i, i+1$ are both members of the cluster then the whole time between $t_i$ and $t_{i+1}$ is assumed to be spent 'inside' the cluster, or more precisely inside the spatial region where the object resides. Conversely, if at least one of the points does not belong to the cluster, then the whole time between $t_i$ and $t_{i+1}$  is spent outside the cluster. 
If the points are
relatively close in time - as we assume -  we postulate  that the presence value can provide a good estimation of the time spent inside the residence.    

\begin{definition}[Stay region]
	\emph{A stay region $S$ is a cluster   satisfying the \emph{minimum presence constraint} defined as:  
	\begin{equation}
	\mathcal{P}(S) \geq \delta
	\end{equation}
	where $\delta \geq 0$ is the \emph{presence threshold}. }
\end{definition}
\begin{example}
	\emph{The cluster $S$ in Figure \ref{stayregion}.(a) is a stay region if $\delta \leq 3$. Conversely, if $\delta > 3$, the time spent in $S$ is not enough for the cluster to represent an object's residence.}
\end{example}
A property, which will be recalled later on, is that  the minimum presence constraint is \emph{monotone}, i.e. if  the constraint  is satisfied by stay region $S_1$, then it is also satisfied by any stay region $S_2$ such that $S_1\subset S_2$.


We now turn to consider sequences of stay regions and 
the notion of segmentation.
A segmentation is a partitioning of a trajectory in a sequence of disjoint segments that can represent either stay regions or segments of unclusters points. Segmentation is defined with respect to the three parameters: $K,\epsilon, \delta$. More formally:

\begin{definition}[Cluster-based segmentation] \emph{Let $T=[1,n]$ be a trajectory and $K, \epsilon, \delta$ the segmentation parameters.
	A segmentation  is a set of disjoint segments $\{S_1,..,S_m\} \cup \{r_0,.., r_m\}$, covering the whole trajectory where: }
	\begin{itemize} 
		\item 	\emph{$S_1,..,S_m $  are stay regions satisfying the following conditions:}
		\begin{itemize} 
			\item \emph{Stay regions are temporally separated}
			\item \emph{ Stay regions are of maximal length, i.e. any point that can be included in the cluster without compromising the temporal separation of the clusters is included. }
		
		\end{itemize} 
		\item \emph{$r_0,..,r_m$ are possibly empty \emph{transitions}. Transitions do not include any point that can be added to stay regions.}
	\end{itemize}
	\emph{A segmentation can be represented as follows: 
	$\xrightarrow{r_0} S_1 \xrightarrow{r_1} S_2 ..\xrightarrow{r_{m-1}} S_m \xrightarrow {r_{m}} $.  
	The sequence of stay regions is referred to as \emph{path}.}
\end{definition}

\begin{example} \emph{Figure \ref{path} shows a segmentation comprising two stay regions w.r.t. $K=4, \epsilon, \delta=0$:  $S1 \xrightarrow{r_1} S2\xrightarrow{r_2}$ 
	where: $S_1=[1,4], S_2=[5,7]\cup [11,11]$, $ r_1=\emptyset, r_2=[12,13]$. 
	The stay regions are maximal.
	The object moves straightforwardly  from $S_1$ to $S_2$, next it experiences a period of absence from $S_2$ and finally leaves it.}
	
\end{example}

\begin{figure}
	\center
	\includegraphics[width=11cm]{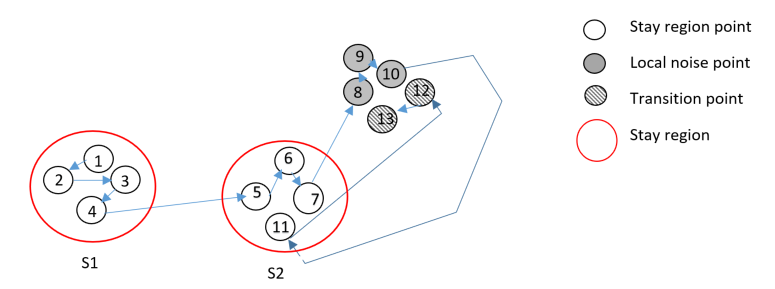}
	\caption{Trajectory segmentation: the segmentation consists of two stay regions $S_1, S_2$ w.r.t. $K=4, \epsilon, \delta=0$. For the sake of readability, the stay regions are labeled $S1,S2$. }
	\label{path}
\end{figure}

\subsection{Properties of the model: spatial separation}
Following the definition of segmentation, the stay regions in a path are temporally separated. 
A straightforward question  is whether the stay regions are also separated in space.  Indeed spatial separation is a natural property of 'conventional' clusters. In our model, however, the situation is  slightly different because the trajectory describes an evolving phenomenon, therefore the notion of spatial separation requires a more precise characterization that takes time into account.
We begin with a  general definition of spatial separation between two stay regions, next we discuss some key properties that are at the basis of the algorithm presented next. Basically, a stay region $S_2$ is spatially separated by $S_1$ if no  point exists in either $S_2$ or in the corresponding local noise that is reachable - in the DBSCAN sense - from a point of $S_1$. 
In other words, while residing in $S_2$ the moving object has to be sufficiently far from $S_1$ even during the periods of absence. Formally:

\begin{figure}[H]
	\center
	\subfigure[]{
		\fbox{\includegraphics[width=4.5cm]{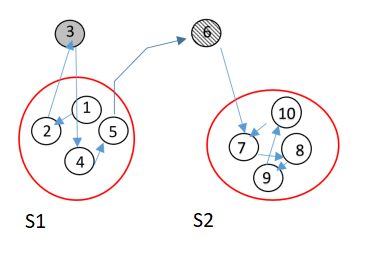}}}
	\subfigure[]{
		\fbox{\includegraphics[width=4.9cm]{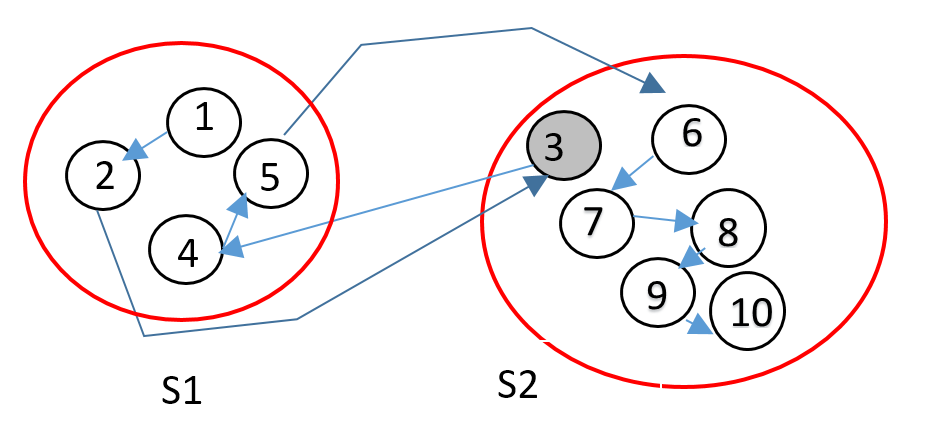}}}
	\caption{Spatial separation of stay regions. (a) Two spatially separated stay regions $S_1, S_2$ (labeled $S1,S2$ for clarity) with $S_1=[1,2]\cup [4,5]$ and $S_2=[7,10]$. Point $3$ is a noise point local to $S_1$, point $6$  a transition point; (b) Asymmetry of the spatial separation relationship: $S_2$ is separated by $S_1$ but the vice-versa is not true.}
	\label{separate}
\end{figure}

\begin{definition}[Spatial separation] \emph{
	\label{defsep}
	Let $S_1$, $S_2$ be two stay regions in a path, non-necessarily consecutive. 
	We say that $S_2$ is spatially separated by $S_1$, denoted $S_2 \vert S_1$,  
	if no point $p \in S_2 \cup \mathcal{N}(S_2)$ 
	belongs to the  $\epsilon$-neighborhood of  any core point $q \in S_1$ (i.e. $p$ is not reachable from $S_1$).  
	Two stay regions that are not spatial separated are said to \emph{overlap}.}
\end{definition}
It can be shown that the relationship of spatial separation between stay regions is asymmetric. 
\begin{equation}
S_j | S_i \nRightarrow  S_i | S_j
\end{equation}
\begin{example} \emph{
	Figure \ref{separate}.(a) shows the segmentation of the trajectory $T=[1,10]$ in two stay regions $S_1,S_2$ connected through the transition  $\{6\}$. $S_2$ is separated from $S_1$ and viceversa. Figure  \ref{separate}.(b) shows an example of stay regions that are not separated. }
	
\end{example}

The next concept is that of Minimal Stay Region (MSR). This concept is at the basis of the cluster-based segmentation algorithm. 
In essence, the MSR is the 'seed' of a stay region. Formally: 


\begin{definition}[Minimal Stay Region] \emph{The MSR of a stay region $S$ (w.r.t. $\epsilon, K. \delta$), denoted $\widehat{S}$,  is the stay region of minimal length contained in $S$ that is created first in time.} 
\end{definition}

\begin{example}
	\emph{The trajectory [1,7] in Figure \ref{msr}.(a) is a stay region (w.r.t. K=4 $\epsilon, \delta=0$) and the MSR is the connected interval [2,5]. Note that the sub-trajectory [1,4] is not a stay region because it does not contain a cluster of at  least 4 elements. The cluster is only created at time $t_5$. At that time, the cluster of minimal length is [2,5].}
\end{example}
\begin{figure}[H] 
	\center
	\subfigure[]{
		\fbox{\includegraphics[width=3.3cm]{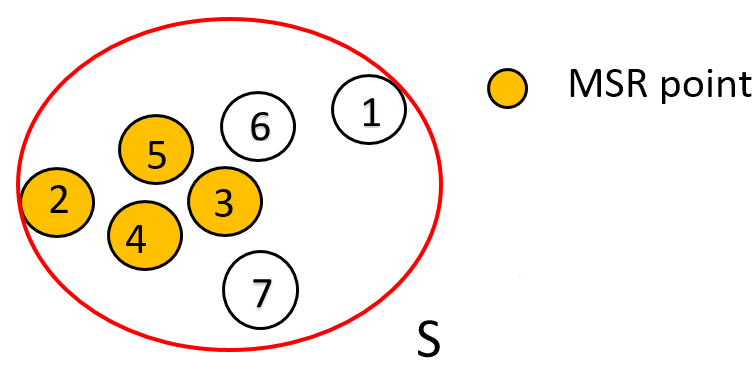}}	}	\subfigure[]{ 	
		\fbox{\includegraphics[width=3.7cm]{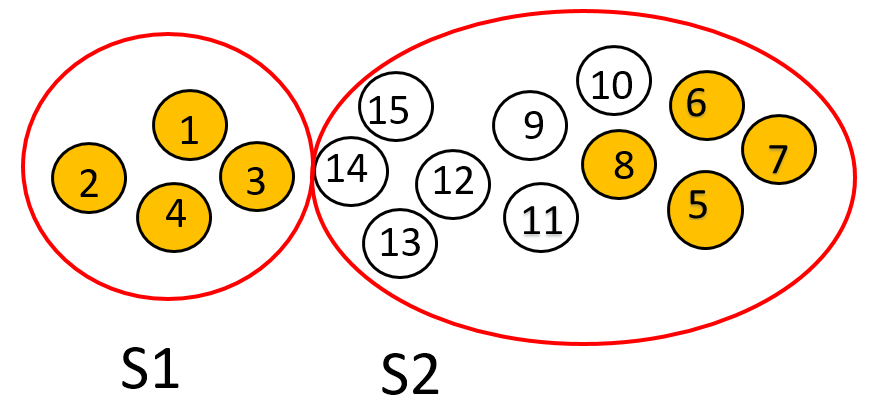}}}
	\subfigure[]{
		\fbox{\includegraphics[width=3.5cm]{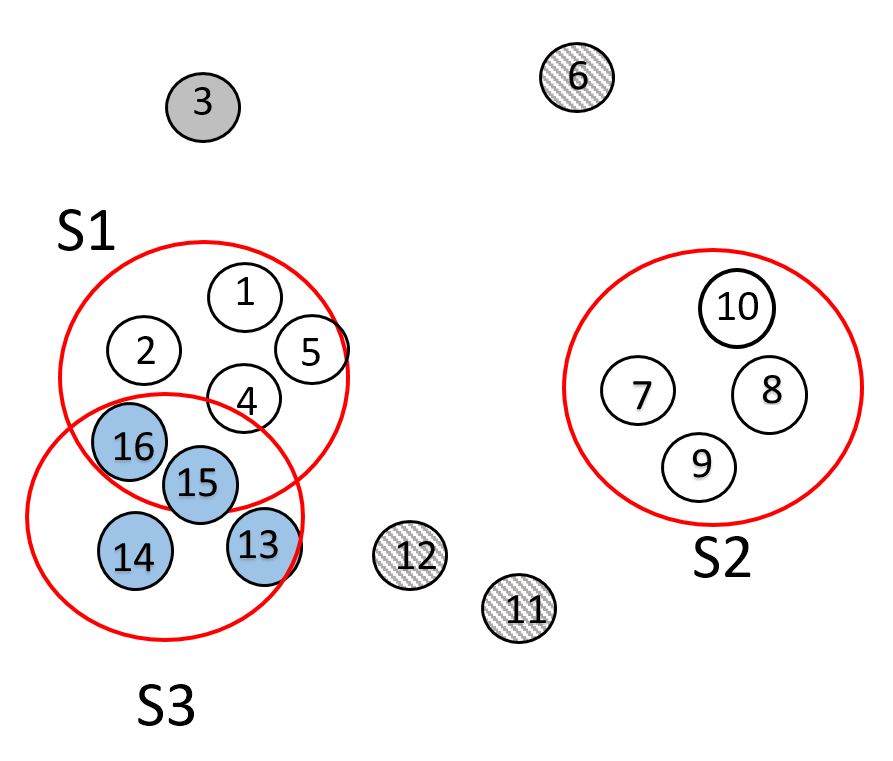}}}	
	\caption{Minimal Stay Region (MSR). (a) The stay region S=[1,7]  contains the MSR [2,5] (yellow points); (b) Weakly separated stay regions: whilst the MSRs are separated, the points of $S_2$ are reachable from $S_1$; (c) Overlapping non-consecutive stay regions: $S_1$, $S_3$ (to ease readability $S_3$ is colored)}
	\label{msr}  
\end{figure}

From the definition of segmentation, 
we can derive the following Theorem stating a necessary condition on the spatial separation of consecutive stay regions:
\begin{theorem} 
	\label{mainth}
	For any pair of consecutive stay regions	$S_i \xrightarrow{r} S_{i+1}$, it holds that the MSR $\widehat{S}_{i+1}$ is spatially separated from $S_i$, namely $\widehat{S}_{i+1}|S_i$. 
\end{theorem}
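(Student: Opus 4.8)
The plan is to argue by contradiction, assuming $\widehat{S}_{i+1}\nmid S_i$ and deriving a violation of the defining clauses of a cluster-based segmentation. By Definition~\ref{defsep}, $\widehat{S}_{i+1}\nmid S_i$ means there is a point $p\in\widehat{S}_{i+1}\cup\mathcal{N}(\widehat{S}_{i+1})$ lying in the $\epsilon$-neighbourhood of some core point $q\in S_i$; equivalently, $p$ is directly density-reachable from $q$. First I would record two elementary facts about such a $p$. Since $p$ lies in the span of $\widehat{S}_{i+1}\subseteq S_{i+1}$ and the stay regions are temporally separated with $S_i$ preceding $S_{i+1}$, the point $p$ is strictly later in time than every point of $S_i$. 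Second, because $p\in N_\epsilon(q)$ with $q$ a core point, the set $S_i\cup\{p\}$ still projects to a DBSCAN cluster (it merely makes $p$ a border or core point), and since the minimum presence constraint is monotone, $S_i\cup\{p\}$ is again a stay region. Thus $p$ is \emph{addable} to $S_i$ as far as the density and presence constraints are concerned, so the only property that could forbid $p\in S_i$ is the temporal-separation clause.

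The heart of the proof is then to show that absorbing $p$ into $S_i$ does not actually compromise temporal separation, so that maximality of $S_i$ forces $p\in S_i$ --- contradicting $p\in S_{i+1}$ (the segments are disjoint), or, if $p\in\mathcal{N}(\widehat{S}_{i+1})$ sits in the gap between $S_i$ and $S_{i+1}$, contradicting the clause that a transition contains no point addable to a stay region. Here I would exploit the two features distinguishing the MSR: it is the stay region \emph{created first in time} inside $S_{i+1}$, and it is of \emph{minimal length}. ``Created first'' means that $\tau$, the time of the last point of $\widehat{S}_{i+1}$, is the earliest instant at which any stay region contained in $S_{i+1}$ comes into existence; consequently, at the instant $t_p\le\tau$ at which $p$ occurs, no stay region following $S_i$ has yet been formed. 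The extended region $S_i\cup\{p\}$, whose latest point is $p$, can therefore temporally overlap only a region already present, namely one of $S_1,\dots,S_i$; but all of those precede $t_p$, so no overlap arises. Hence $S_i\cup\{p\}$ is a legitimately larger stay region respecting temporal separation, violating the maximality of $S_i$.

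The step I expect to be the main obstacle is precisely reconciling this ``first-created'' timing argument with the static reading of the maximality clause, which forbids compromising the separation of \emph{the} (final) clusters: in the completed segmentation $S_{i+1}$ does exist and its span contains $t_p$, so a naive reading would let temporal separation shield $p$. To close this gap I would first reduce to the \emph{earliest} point $p$ of $\widehat{S}_{i+1}\cup\mathcal{N}(\widehat{S}_{i+1})$ reachable from $S_i$, and then rule out any point of $S_{i+1}$ strictly preceding $p$ that is not itself reachable from $S_i$: such an earlier non-reachable point, combined with the transition clause and the minimality of $\widehat{S}_{i+1}$, would let me peel the reachable prefix off $S_{i+1}$ and attach it to $S_i$ without introducing a temporal overlap, again contradicting either maximality of $S_i$ or the first-creation of $\widehat{S}_{i+1}$. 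Secondary technicalities to verify are that $\widehat{S}_{i+1}$ may be a disconnected interval (so its ``span'' must be read as the enclosing connected interval), the boundary case $t_p=\tau$, and the core-versus-border distinction in Definition~\ref{defsep}; I expect none of these to be essential, but all must be checked.
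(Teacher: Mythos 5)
Your proposal follows essentially the same route as the paper's proof: argue by contradiction, take the earliest point of $\widehat{S}_{i+1}\cup\mathcal{N}(\widehat{S}_{i+1})$ that is directly reachable from $S_i$, observe that adjoining it to $S_i$ still yields a cluster satisfying the (monotone) presence constraint with no stay region intervening in time, and contradict the maximality of $S_i$. The technicalities you flag (reconciling the maximality clause with the final segmentation, disconnected MSRs, the core/border distinction) are legitimate but are exactly the points the paper's own two-line argument glosses over when it asserts that no cluster can exist between $S_i$ and the chosen point.
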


\begin{proof} \emph{
	The proof is by contradiction. Suppose that $\widehat{S}_{i+1}$ is not separated from $S_i$. Based on Definition \ref{defsep},  at least one point  exists in the set  $\widehat{S}_{i+1}  \cup \mathcal{N}(\widehat{S}_{i+1})$ that is directly reachable from $S_i$. Let $j$ be the point with lowest index reachable from $S_i$ and consider the segment $S_i \cup \{j\}$.
	We see that:  a) no other cluster can exist in between $S_i$ and $j$ because $j$ is the lowest index; 
	b) the segment satisfies the minimum presence constraint.  Thus  $S_i \cup \{j\}$  is  a stay region in the path. 
	However, that contradicts the assumption that $S_i$ is a cluster of maximal length. Therefore $\widehat{S}_{i+1}$ must be separated from the previous stay region, which is what we wanted to demonstrate $\diamond$
	}
\end{proof}

The next two corollaries provide a motivation for specific mobility behaviors that can be observed in a trajectory. In particular  Corollary  \ref{cor1} states that two consecutive stay regions can spatially overlap for some time.  The intuition is that  when the  object leaves a residence for another residence, after a while it can start moving back gradually to the previous region. 
Corollary \ref{cor2} states that 
two non-consecutive stay regions, even identical in space,  but frequented in different periods, are treated as two different stay regions. In other terms,  non-consecutive stay regions can  overlap.  Formally:

\begin{corollary}
	\label{cor1}
	Let  $S_i \xrightarrow{r} S_{i+1}$ be two consecutive stay regions. The points in $S_{i+1}$ following in time the minimal stay region $\widehat{S}_{i+1}$ 
	may be not spatially separated from $S_i$. We refer to this property as weak spatial separation.
\end{corollary}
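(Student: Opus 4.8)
The statement is a possibility claim---it asserts that weak separation \emph{can} occur---so I would establish it by producing a witnessing trajectory, backed by an argument showing that nothing in the model forces separation of the points in question. The plan is thus in two parts: first pin down exactly where the separation guarantee of Theorem \ref{mainth} stops, and then construct an explicit trajectory that exploits the residual freedom.

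First I would revisit the proof of Theorem \ref{mainth} to see precisely what it buys. That argument selects the lowest-index point $j\in\widehat{S}_{i+1}\cup\mathcal{N}(\widehat{S}_{i+1})$ reachable from $S_i$ and derives a contradiction from the fact that \emph{no cluster lies temporally between $S_i$ and $j$}, so that $S_i\cup\{j\}$ would be an admissible stay region contradicting the maximality of $S_i$. The key point is that the hypothesis ``no intervening cluster'' holds only up to the formation of the MSR. For a point $p\in S_{i+1}$ occurring \emph{after} $\widehat{S}_{i+1}$ in time, the already-formed cluster $\widehat{S}_{i+1}$ sits temporally between $S_i$ and $p$; absorbing $p$ into $S_i$ would force a single stay region to straddle $\widehat{S}_{i+1}$, violating the temporal separation of stay regions required by the segmentation. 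Hence maximality of $S_i$ imposes no constraint on such a $p$, and the model freely permits $p$ to lie in the $\epsilon$-neighbourhood of a core point of $S_i$, i.e.\ to fail $S_{i+1}\vert S_i$.

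It remains to exhibit that this freedom is attainable, which I would do by construction, in the spirit of Figure \ref{msr}.(b). I would place $S_i$ and the MSR $\widehat{S}_{i+1}$ in two spatially distinct dense regions so that Theorem \ref{mainth} is honoured, then append points that drift gradually from the MSR back toward the region of $S_i$, chosen so that they (i) form a density-connected chain with $\widehat{S}_{i+1}$ and therefore belong to $S_{i+1}$, and (ii) include at least one point inside the $\epsilon$-neighbourhood of a core point of $S_i$. The delicate part---and the main obstacle---is the internal consistency of the witness: one must check simultaneously that the drifting points are genuinely assimilated into $S_{i+1}$ by density-reachability (rather than spawning a fresh intervening stay region), that they stay temporally separated from $S_i$ so that $S_i,S_{i+1}$ remain legitimately consecutive and maximal, and that the spatial reachability from $S_i$ does not cause the two regions to be merged by the segmentation logic. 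This last point is exactly the overlap phenomenon the corollary records, and verifying (i)--(ii) against the cluster, presence, and maximality definitions is where the real work lies; the argument in the previous paragraph is what guarantees the check can succeed.
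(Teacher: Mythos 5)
Your proposal is correct and takes essentially the same approach as the paper, which establishes the corollary only by pointing to the witness in Figure \ref{msr}.(b) together with the observation that Theorem \ref{mainth} constrains just the MSR $\widehat{S}_{i+1}$ and not the later points of $S_{i+1}$. Your explicit explanation of why the maximality argument cannot be pushed past the MSR (absorbing a later point into $S_i$ would make $S_i$ straddle $\widehat{S}_{i+1}$ and break temporal separation) is a sharper articulation than the paper's informal remark, but it is the same route.
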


\begin{corollary}
	\label{cor2}
	Two non-consecutive stay regions can overlap
\end{corollary}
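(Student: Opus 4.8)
The plan is to prove this \emph{existence} claim by exhibiting a single trajectory whose segmentation, under the Cluster-based segmentation definition, contains three temporally ordered stay regions $S_1 \xrightarrow{r_1} S_2 \xrightarrow{r_2} S_3$ in which $S_1$ and $S_3$ occupy essentially the same dense region of space, so that $S_3$ is not spatially separated from $S_1$ in the sense of Definition \ref{defsep}. Concretely, I would take an object that resides in a dense region $A$ long enough to accumulate presence at least $\delta$, then transits to a second dense region $B$ where it again stays with presence at least $\delta$, and finally returns to $A$ and resides there once more. The first sojourn in $A$ yields $S_1$, the sojourn in $B$ yields $S_2$, and the recurrence to $A$ yields $S_3$; the pair $(S_1,S_3)$ is non-consecutive precisely because $S_2$ lies temporally between them. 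Since I only need to demonstrate that overlap \emph{can} occur, producing one admissible segmentation with this structure suffices, and I need not worry about whether other segmentations of the same trajectory exist.

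Next I would verify that this is a legitimate segmentation. The three segments are temporally separated by construction, and each satisfies the minimum presence constraint by the chosen residence durations, so each is a genuine stay region. The only nontrivial condition is maximality of $S_1$: I must argue that the recurrence-to-$A$ points cannot be absorbed into $S_1$. This follows because appending those points to $S_1$ would stretch the temporal extent of $S_1$ across the time occupied by $S_2$, destroying the temporal separation required of stay regions. Hence maximality does \emph{not} force the later $A$-points into $S_1$; they are instead free to constitute the separate stay region $S_3$, with the intervening travel collected into the transitions $r_1,r_2$, which contain no point addable to any stay region.

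The overlap itself is then immediate from the geometry. Because $S_1$ and $S_3$ lie in the same dense region $A$, there is a point $p \in S_3$ (indeed a core point of the $A$-cluster) lying in the $\epsilon$-neighborhood of some core point $q \in S_1$; by Definition \ref{defsep} this means $S_3$ is not spatially separated from $S_1$, i.e. the two stay regions overlap. I would also point out that Theorem \ref{mainth} raises no obstruction: its separation requirement $\widehat{S}_{i+1}\,|\,S_i$ constrains only \emph{consecutive} pairs, so it says nothing about the non-consecutive pair $(S_1,S_3)$, which is exactly the room the corollary exploits.

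The step I expect to be the main obstacle is guaranteeing that the middle visit $B$ is genuinely recognized as its own stay region $S_2$, rather than being swallowed as local noise of a single disconnected $A$-region spanning both sojourns; it is the forced existence of $S_2$ that both separates $S_1$ from $S_3$ temporally and blocks their merger. Pinning this down amounts to making the maximality and temporal-separation conditions interact precisely: a merged $A$-region would place the $B$-stay inside its temporal extent, contradicting temporal separation from a region ($S_2$) that the density and presence thresholds independently certify as a stay region. Once this is made rigorous, the remaining checks (presence thresholds in $A$ and $B$, and reachability within $A$) are routine.
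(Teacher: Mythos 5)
Your proof is correct and follows essentially the same route as the paper, which establishes this corollary only by exhibiting the back-and-forth example of Figure \ref{msr}.(c) (and later Pattern 3 in Section 4.3) together with the observation that Theorem \ref{mainth} constrains only consecutive pairs. Your write-up is in fact more careful than the paper's, since you explicitly check maximality and temporal separation and flag the genuine subtlety that the middle sojourn must be certified as its own stay region rather than absorbed as local noise of a single disconnected region.
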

Weakly-separated and overlapping regions are exemplified in Figure \ref{msr}.(b) and  \ref{msr}.(c), respectively. 

Finally, the following theorem  reformulates the notion of path in more specific terms. It follows straightforwardly  from the above  results.
\begin{theorem}
	\label{thcore}
	A path in a trajectory (w.r.t. $\epsilon, K, \delta$) is a sequence of temporally separated stay regions of maximal length %
	and possibly  pairwise weakly spatially separated
\end{theorem}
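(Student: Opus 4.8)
The plan is to read the statement as a conjunction of three separate claims about a path $\xrightarrow{r_0} S_1 \xrightarrow{r_1} \cdots \xrightarrow{r_{m}}$ and to dispatch each in turn, leaning almost entirely on the definition of cluster-based segmentation together with Theorem \ref{mainth} and Corollaries \ref{cor1} and \ref{cor2}. The three conjuncts are: (i) the $S_i$ are temporally separated; (ii) each $S_i$ is of maximal length; and (iii) consecutive stay regions are \emph{weakly} spatially separated. Claims (i) and (ii) are not really theorems to be established but immediate re-readings of the definition of segmentation, which imposes temporal separation and maximality on the stay regions of any path; I would therefore simply observe that they hold by construction. The substantive part is (iii).

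For (iii) the plan is to argue link by link along the path. I would fix an arbitrary consecutive pair $S_i \xrightarrow{r_i} S_{i+1}$ and invoke Theorem \ref{mainth}, which gives that the minimal stay region of the successor is spatially separated from the predecessor, i.e. $\widehat{S}_{i+1} \vert S_i$. Recalling that \emph{weak} spatial separation of a consecutive pair (Corollary \ref{cor1}) means exactly that the MSRs are separated while the later points of $S_{i+1}$ need not be, this is precisely the weak-separation condition for the link $S_i \to S_{i+1}$. Since the pair was arbitrary, every consecutive link of the path is weakly spatially separated, which is conjunct (iii).

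Finally I would justify the hedging words ``possibly'' and ``weakly'', which is where the only real care is needed. One cannot in general upgrade weak to strong separation: Corollary \ref{cor1} exhibits points of $S_{i+1}$ following $\widehat{S}_{i+1}$ that remain reachable from $S_i$, so $S_{i+1} \vert S_i$ may fail even though $\widehat{S}_{i+1} \vert S_i$ holds. Likewise one cannot extend any separation to all pairs, since Corollary \ref{cor2} allows two non-consecutive stay regions to overlap; this forces the claim to be restricted to consecutive links. The main obstacle I anticipate is thus notational rather than mathematical: I must make sure the phrase ``pairwise weakly spatially separated'' is read as ``each consecutive pair is weakly separated'' (as dictated by Corollary \ref{cor2}), so that Theorem \ref{mainth} supplies exactly the asserted condition and nothing stronger is smuggled in. Once the terminology is pinned down this way, the theorem is an immediate assembly of the prior results, consistent with its being stated to follow straightforwardly from them.
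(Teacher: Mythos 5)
Your proposal is correct and matches the paper's own treatment: the paper gives no explicit proof, stating only that the theorem ``follows straightforwardly from the above results,'' and your assembly of the definition of segmentation (for temporal separation and maximality) with Theorem \ref{mainth} and Corollaries \ref{cor1}--\ref{cor2} (for the weak pairwise separation of consecutive stay regions) is exactly that intended argument. Your care in reading ``pairwise'' as applying to consecutive pairs, in light of Corollary \ref{cor2}, is the right disambiguation.
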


An important property is the following. 

\begin{corollary}
	The path in a trajectory may be not unique. 
\end{corollary}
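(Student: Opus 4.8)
The statement is an existence claim — that some trajectory admits more than one valid path — so the plan is to prove it by exhibiting a single counterexample and verifying that this one trajectory carries two genuinely distinct segmentations, each satisfying every clause of the cluster-based segmentation definition. I would therefore construct a small, explicit trajectory rather than argue in the abstract.

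The construction I would use exploits exactly the border-point ambiguity flagged in the footnote to the DBSCAN model. I lay out two spatially well-separated dense blobs: a cluster $A$ occupying the early time steps and a cluster $B$ occupying the later ones, with their core points mutually unreachable (pairwise distance greater than $\epsilon$) so that $A$ and $B$ are genuinely two distinct DBSCAN clusters and never merge into one. Between them, at a single intermediate time step, I place one point $q$ positioned so that $q$ lies in $N_\epsilon$ of some core point of $A$ and simultaneously in $N_\epsilon$ of some core point of $B$, while $q$ itself is not a core point — so it is reachable from both clusters but cannot bridge the two cores into one cluster. Finally I fix $\delta$ small enough that $A$ and $B$, with or without $q$, both satisfy the minimum presence constraint. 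The point $q$ is then claimable by only one region, because assigning it to both would force the index spans of the two stay regions to overlap and thereby violate temporal separation.

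From here I would read off two paths. In the first, $q$ is absorbed by the earlier region, giving $S_1 = A \cup \{q\}$ and $S_2 = B$; in the second, $q$ is absorbed by the later region, giving $S_1 = A$ and $S_2 = \{q\} \cup B$. I would then check each candidate against the definition: the two stay regions remain temporally separated in both cases (their index intervals stay disjoint); every includable point is actually included, since in either path $q$ is placed in one of the two regions and no point is left orphaned; and the intervening transition contains no point addable to a stay region. I would also confirm consistency with Theorem~\ref{mainth}, verifying that the MSR of the second region is spatially separated from the first in each of the two paths.

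The main obstacle is the \emph{maximality} clause, since that is the condition most likely to collapse the two paths into one. The worry is that maximality might be read as forcing $q$ into the earliest region that can reach it, which would make $S_1 = A \cup \{q\}$ the only legal assignment. The key point I would develop is that maximality forbids leaving an includable point \emph{out of every region}; it does not dictate which region claims a point that is already included. In the second path $q$ is included (in $S_2$), and temporal separation genuinely blocks $q$ from also joining $S_1$, so $S_1 = A$ is maximal relative to that segmentation. Establishing that both assignments are maximal — and in particular that the two paths are incomparable, so that neither is merely a refinement of the other — is the crux, and it rests precisely on using temporal separation to show that the shared reachable point $q$ is already "spoken for" by whichever region takes it.
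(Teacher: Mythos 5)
Your overall strategy---establishing the corollary by exhibiting one trajectory that admits two valid segmentations---is exactly how the paper proceeds: it offers no formal proof, only the example of Figure~\ref{fig:problem}. The witnesses, however, are genuinely different, and yours is the weaker of the two. In the paper's example the two paths differ in \emph{which cluster} is chosen as $S_2$: one candidate cluster forms first in time, another contains more points, and either choice yields a legal segmentation. This is a structural ambiguity in selecting the next stay region from among several temporally subsequent candidates, and it is precisely what motivates the ``first path'' criterion that SeqScan then commits to. Your two paths, by contrast, contain the same two cluster cores in the same temporal order and differ only in the ownership of a single shared border point $q$. That construction has two soft spots. First, under the literal cluster definition (a stay region must project onto a DBSCAN cluster, and DBSCAN Maximality forces every point density-reachable from the cluster into it), neither $A$ alone nor $B$ alone is a legal cluster once $q$ is reachable from both; your example only goes through under the practical, order-dependent treatment of border points that the paper's own footnote explicitly sets aside as ``marginal for our work.'' Second, as you yourself identify, the validity of the path $(A,\,\{q\}\cup B)$ hinges on reading the maximality clause as ``no includable point is left out of \emph{every} region'' rather than ``each region greedily absorbs every point it can reach''; the definition as written does not settle this, and under the greedy reading your second path is illegal and the example collapses. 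The paper's example is immune to both objections because its non-uniqueness lives at the level of cluster selection, not point assignment. So your proposal is defensible as stated but rests on a technicality the paper deliberately brushes aside, and it misses the substantive source of non-uniqueness the corollary is there to flag.
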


\begin{figure}
	\center
	\subfigure[]{
		\fbox{\includegraphics[width=5cm]{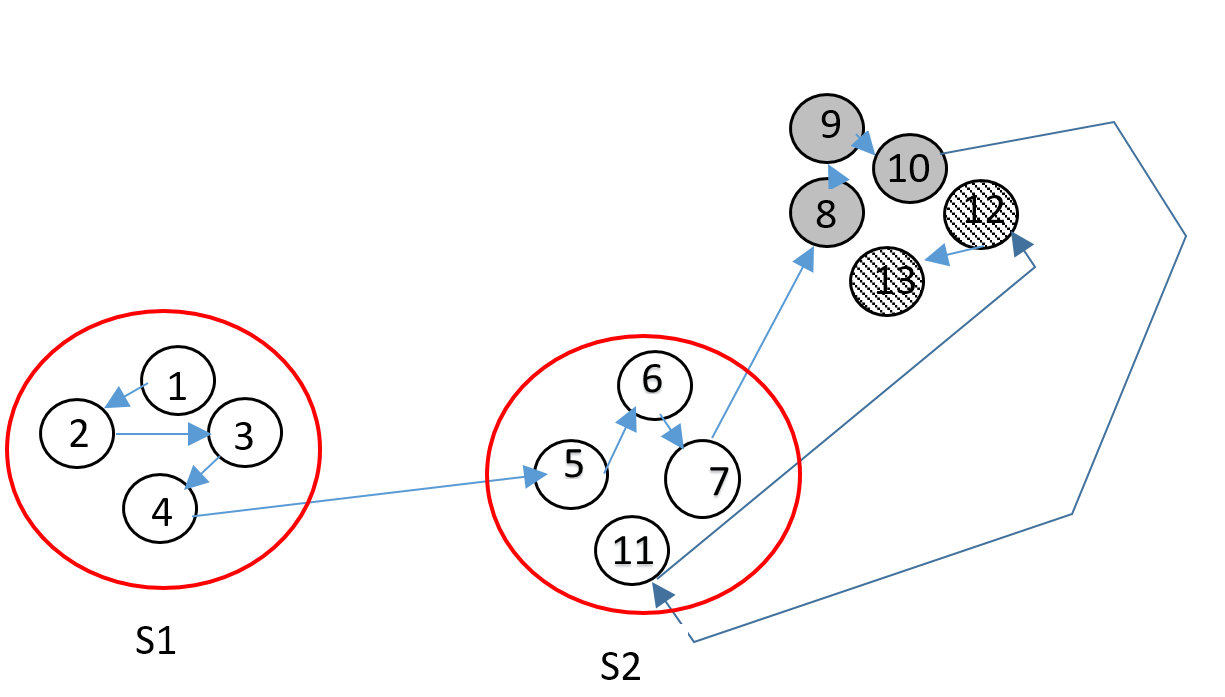}}}
	\subfigure[]{
		\fbox{\includegraphics[width=5.5cm]{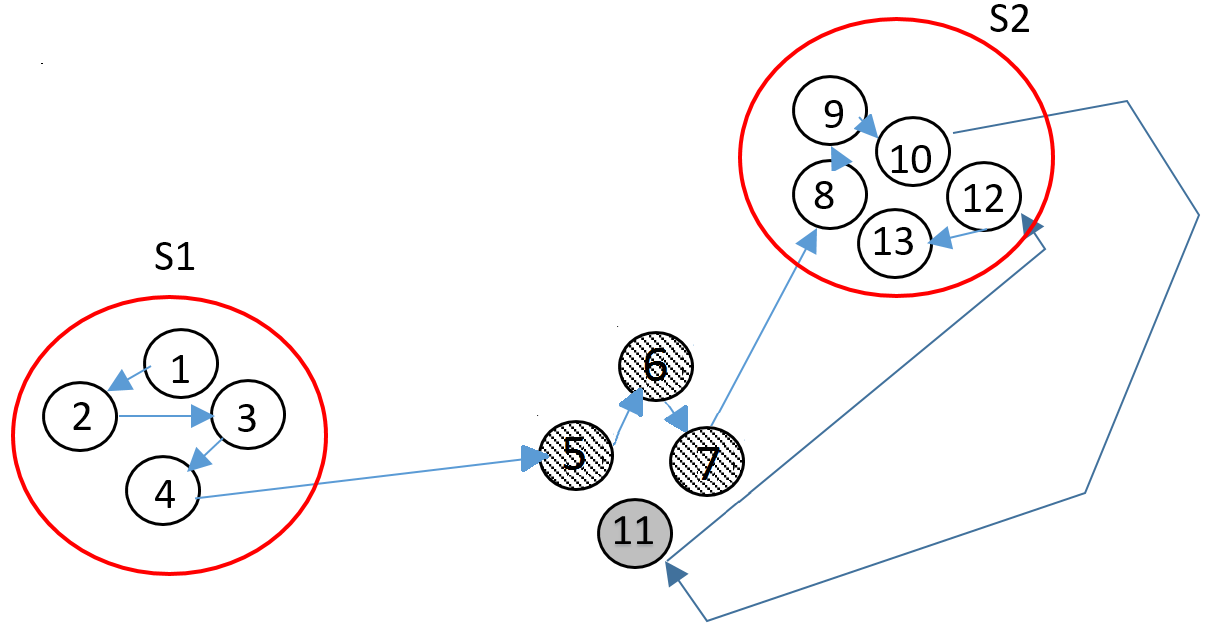}}}
	\caption{Two different segmentations for the same  trajectory (w.r.t. $K=4, \delta=0$, $\epsilon$). (a) $S_2$ is the stay region detected  first; (b) $S_2$ is the region containing the highest number of points.} 
	\label{fig:problem}
\end{figure}
\begin{example}
	\emph{Figure \ref{fig:problem} shows two different segmentations for the same trajectory, both including two stay regions $S_1, S_2$.  Such regions are however selected based on different criteria.  (a) $S_2$ is the stay region that is created first in time after $S_1$. (b) $S_2$ is the stay region with the highest number of points after $S_1$. }
\end{example}


\section{The SeqScan algorithm and the analysis of the temporal parameter}
Based on the  model, we  describe the SeqScan algorithm for the cluster-based segmentation of a trajectory. Next we discuss the problem of how to choose the temporal parameter.

\subsection{The segmentation algorithm}
We have seen that the segmentation may be not unique. Therefore, we need to specify the criterion based on which selecting the stay regions of the sequence. We choose the following criterion: 
following the temporal order, the first cluster that satisfies the minimum presence constraint becomes the next stay region in the sequence.  
This criterion has an intuitive explanation: 
\emph{an object resides in a region  until another attractive residence is found}.  The resulting path is called hereinafter \emph{first path}.  The \emph{SeqScan} algorithm for the extraction of the first path is presented in the following:

\paragraph{Overview.}

\begin{algorithm}[t]
	\caption{SeqScan}

	\begin{algorithmic}
		
		\Procedure {SeqScan}
		{
			In: $T=[1,n], \epsilon,K,\delta$; Out: stayRegionsSet}

		\State $\textit{stayRegionsSet} \gets \emptyset$
		\State $\textit{C} \gets \emptyset$ \Comment{the context of the active stay region} 
		\State $\textit{P} \gets \emptyset$ \Comment{pool of points for MSR search}
		\State $\textit{i} \gets \textit{1}$  \Comment{current scan index}
		\State $\textit{activeStayRegion} \gets \emptyset$ 
		\While{$i\leq n$}
		\State $\textit{Insert(C,i)}$ \Comment   incremental clustering of C
		\If {$\textit{\textbf{canExpand}(activeStayRegion, i, C)}$} \Comment{if $i$ can be added}
		\State $P \gets \emptyset$ \Comment{pool reset}
		
		\Else
		\State $\textit{Insert(P,i)}$ \Comment  incremental clustering of P
		\State $\textit{nextStayRegion} \gets \textit{\textbf{findMSR}$(P)$}$ \Comment{find the next minimal stay region}
		\If {$\textit{nextStayRegion}\neq \emptyset$ }
		\State \textit{stayRegionsSet} $\gets$   \textbf{\textit{addStayRegion}} \textit{(activeStayRegion)}	
		\State $\textit{activeStayRegion} \gets \textit{nextStayRegion}$
		\State $C \gets P$		\Comment{set the context for the new MSR}
		\State $P \gets \emptyset$ \Comment{reset of the  pool}
		\EndIf
		\EndIf
		
		\State $\textit{i} \gets \textit{i+1}$
		\EndWhile

		\EndProcedure
	\end{algorithmic}
	\label{algoSeq}
\end{algorithm}

The SeqScan algorithm  extracts from the trajectory $T$  a sequence of stay regions, based on the three input parameters $K, \epsilon, \delta$. The noise points can then be obtained by difference from $T$ and straightforwardly classified in transition and local noise points. 
The algorithm scans the trajectory, iterating through the following phases:  i) Find a Minimal Stay Region. Such  MRS becomes the \emph{active} stay region. ii) Expand the active stay region. 
iii) Close  the active stay region. Once closed, a stay region cannot be  expanded anymore.
More specifically:
\begin{itemize}
	\item [(i)] Search: the algorithm runs the DBSCAN algorithm on the spatial projection of the input sequence (i.e. spatial points). 
	The clustering algorithm  processes  the  points in the temporal order, progressively aggregating points in  clusters. The  cluster that for first in time satisfies the minimum presence constraint  determines the new MSR $S_i$. $S_i$ becomes the \emph{active} stay region.
	
	\item [(ii)] Expand: the active stay region  is expanded.  
	The question at this stage is how to determine the end of the expansion and thus the break of the segment. We recall that, in the stay region model, a point that is not reachable from a cluster can indicate either a temporary absence, or a transition or  be an element of a more recent stay region. Therefore such a point cannot be correctly classified,  until the movement evolution is known. %
	The proposed solution is detailed next.
	\item [(iii)] Close: the active stay region is deactivated, or closed, when a more recent MSR is found. Such MSR becomes the  new active stay region $S_{i+1}$. A closed stay region is simply a stay region in its final shape.
\end{itemize} 

\paragraph{Detailed algorithm.}

The pseudo-code is reported in Algorithm \ref{algoSeq}. 
At each step, the algorithm  tries first to expand the active stay region $S_j$, and if that is not possible, tries to create a new MSR $S_{j+1}$. 
To perform such operations, the algorithm  maintains two  different sets of points that are clustered incrementally using the Incremental DBSCAN algorithm \citep{Ester1998}. These sets are called $C$ and $P$, respectively.  $C$ represents the \emph{Context} of the active stay region $S_j$, namely the set of  points that at each step can be used for the expansion of the cluster. Such points follow the previous stay region in the sequence, thus $C$ is separated from $S_{j-1}$. The set $P$ is the \emph{Pool} of points following in time the active stay region and representing the space where to search for the next MSR. Accordingly $P$ is temporally separated from $S_j$.
When a new point is added to either C or P, the set is clusterized incrementally using the Incremental DBSCAN technique\citep{Ester1998}. The  processing of the input point $i$ is thus as follows: 
\begin{itemize}
	\item  $i$ is  first added to the Context $C$. If the point can be added to the current cluster, then the stay region is prolonged to include the point. Next the Pool is reset to the empty set.
	\item if $i$ cannot be added to the active stay region, then $i$ is added to the Pool $P$. If a MSR can be created out of P then such a MSR becomes the new active stay region $S_{i+1}$. Accordingly, $S_i$ is closed,  the Pool $P$ becomes the Context for $S_{i+1}$ and $P$ is reset to the emptyset. 
\end{itemize}
The run-time complexity of SeqScan is that of Incremental DBSCAN, i.e. $O(n^2)$ \citep{Ester1998,Tao2015}. 
%
\begin{example}
	\emph{
	We illustrate the algorithm through an example focusing on the expansion part. Consider a trajectory T=[1,13]. The clustering parameters are set to: K=4,  $\delta=0$, $\epsilon$ sufficiently small. We analyze the expansion of the  active stay region $S_1$, starting from the MSR depicted in the Figure \ref{exalgo}.(a). For every subsequent point, we report the change of state defined by the triple: active stay region, C, P. }
	
	\begin{enumerate}
		
		\item 
		\emph{State: $S_1=[1,1] \cup [3,5]$,  C= [1,5], P= $\emptyset$}
		
		\item \emph{Read point: 6.  The point cannot be added to the active stay region, thus the state is: $S_1$, C= [1,6], P=[6,6]}
		\item \emph{Read  point: 7. The point cannot be added to the active stay region. State: $S_1$,
		C= [1,7], P=[6,7]}
		\item \emph{Read point 8. The point can be added to the active stay region. State:  $S_1=[1,1] \cup [3,5]\cup[8,8]$, C=[1,8], P=$\emptyset$.}
		\item \emph{Read point 9. The point cannot be added to the active stay region. State: $S_1$, C=[1,9], P=[9,9]}
		\item  \emph{Read point 10, as above. State: $S_1$, C=[1,10], P=[9,10]}
		\item  \emph{Read point 11, as above. State: $S_1$, C=[1,11], P=[9,11]}
		\item  \emph{Read point 12, as above. State: $S_1$, C=[1,12], P=[9,12].}
		\item  \emph{Read point 13. The point cannot be added to the active stay region. However, the  insertion of the point in P, i.e. P=[9,13], generates a new stay region. Accordingly  $S_1$ is closed, $S_2=[10,13]$, C=[9,13], P=$\emptyset$. The scan is terminated}
	
	\end{enumerate}

	\emph{The final stay regions are thus $S_1=[1,1] \cup [3,5] \cup [8,8]$ and $S_2=[10,13]$. The noise  points can then be  classified. Points  2, 6,7  fall in the temporal extent [1,8] of $S_1$ thus are local noise; point 9 is a transition point}
	\begin{figure}[H] 
		\center
		\subfigure[State at step 5. Next input point: 6]
		{\fbox{\includegraphics[height=2.6cm]{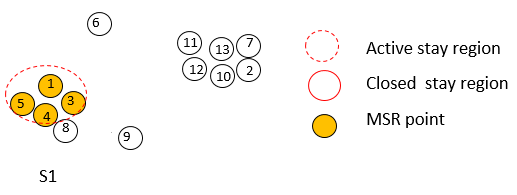}}}
		\subfigure[State at step 13. Next the trajectory terminates and $S_2$ is closed]
		{\fbox{\includegraphics[height=2.6cm]{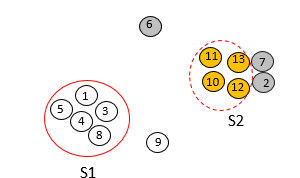}}}
		\caption{SeqScan processing. (a) $S_1$ becomes the active stay region; (b) the creation of $S_2$}
		\label{exalgo}
	\end{figure}

\end{example}
\begin{theorem}
	Algorithm \ref{algoSeq} computes the first path, if any path exists in the input trajectory.
\end{theorem}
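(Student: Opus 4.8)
The plan is to prove correctness by induction on the number of stay regions committed to the output, carried by a loop invariant on the triple $(\textit{activeStayRegion}, C, P)$. First I would state precisely what ``computing the first path'' requires: the output sequence must coincide, element by element, with the path produced by the greedy rule fixed earlier in the paper, namely that the next stay region is the earliest-in-time cluster (after the current one is closed) satisfying the minimum presence constraint, each region being expanded maximally. The proof then reduces to showing that the while-loop realizes exactly this rule.

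The central invariant I would maintain, indexed by the scan position $i$, asserts that (I1) \textit{activeStayRegion} equals the maximal stay region $S_j$ that the first-path rule assigns as the $j$-th element, restricted to the points seen so far and temporally separated from $S_{j-1}$; (I2) $C$ is exactly the set of points following $S_{j-1}$ up to $i$ that form the context for expanding $S_j$; and (I3) $P$ is exactly the set of points lying strictly after the current last point of $S_j$ and not reachable from $S_j$ — the candidate material for the next MSR. I would then verify that these invariants survive both branches of the loop body: the branch where the new point expands $S_j$ (the \textbf{canExpand} test succeeds), and the branch where it does not and is pooled into $P$.

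For the inductive step, when \textbf{findMSR}$(P)$ returns a non-empty region I would invoke Theorem \ref{mainth} to conclude that this region — being the first cluster of $P$ to meet the presence constraint, hence a minimal stay region created first in time — is spatially separated from $S_j$. This simultaneously certifies that $S_j$ admits no further expansion (maximality, matching the rule) and that the returned region is legitimately $\widehat{S}_{j+1}$. The greedy ``earliest cluster satisfying presence'' choice is enforced automatically: Incremental DBSCAN processes $P$ in temporal order and \textbf{findMSR} is consulted after each insertion, so the MSR surfaces at the exact first index at which the presence constraint becomes satisfied, which is precisely the definition of the first path. The base case is the discovery of $S_1$ by the same mechanism with $C$ and $P$ initially empty, and the ``no path'' clause follows because if no cluster ever meets the presence constraint then \textbf{findMSR} never fires and the output is empty, which is the correct answer; I would also note that the final \textit{activeStayRegion} must be flushed to the output at termination, since the loop only records a region at the moment it is closed.

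The main obstacle will be justifying the reset $P \gets \emptyset$ in the expansion branch. I must show that absorbing a returning point into $S_j$ turns every point currently pooled in $P$ into local noise of $S_j$, so none of them can seed a later stay region. This rests on two facts already available — the maximality of $S_j$ and the monotonicity of the minimum presence constraint: once the last point of $S_j$ advances past the pooled points, those points fall inside the temporal extent of $S_j$, and since all later stay regions must be temporally separated from $S_j$, they can never contribute to $\widehat{S}_{j+1}$. Ensuring invariant (I3) survives this reset, and also survives the handover $C \gets P$ when a new MSR is opened (where the old pool, provably separated from $S_{j-1}$, becomes the new context), is the delicate bookkeeping on which the whole argument turns.
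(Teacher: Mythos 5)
Your proposal is correct and follows essentially the same route as the paper's own proof, which likewise argues that the Context $C$ guarantees maximality and first-creation while the Pool $P$ guarantees temporal and weak spatial separation; your loop-invariant formulation merely makes the paper's prose argument precise, and you rightly flag two points the paper glosses over (why the reset $P\gets\emptyset$ is sound, and the need to flush the final active region at termination). One small caution: the spatial separation of the new MSR from $S_j$ should be derived directly from the fact that it is assembled only from points of $P$ that failed \textbf{canExpand} (as you in fact do), rather than by invoking Theorem \ref{mainth}, which is a property of valid segmentations and would be circular when applied to an output not yet shown to be one.
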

\begin{proof} \emph{
	We need to prove that the resulting stay regions  are maximal, temporally separated and pairwise weakly  spatially separated. Moreover, every stay region is the first to be created after the previous one. The reasoning is as follows.
	The algorithm builds a stay region by expanding the first MSR that is encountered.  Further the stay region is expanded based on the Context $C$ that includes all of the points following the end of the previous stay region, thus all those that potentially can be added to the stay region. 
	The stay region is thus maximal and the first to be created.  The stay regions are  weakly spatially separated  because when a MSR  is found in the Pool $P$, it cannot contain points 'close' to the previous stay region (otherwise such point would be added to that stay region). Yet, once the MSR is created, the subsequent points that are added to the active stay region during the expansion phase can be located even in close proximity with the preceding stay region.  The stay regions are  also temporally separated because stay regions are  created and next expanded from the two sets $P$ and $C$ that by definition are separated from the previous stay region.}
\end{proof}


\subsection{Choice of the 'presence' parameter}
\label{sec:param}

SeqScan requires in input the presence threshold parameter $\delta$. A question of practical relevance is how to set the value for this parameter.
In this section
we analyze the relationship between  $\delta$ and the number of stay regions. We recall that the presence threshold somehow constrains the temporal granularity of the stay regions in the path. The purpose of this analysis is to help determine the desired level of temporal granularity. The Optics system \citep{Optics1999} has a similar intent, though applied to a density parameter of DBSCAN, and not to time as in our case.

Consider  a trajectory $T=[1,n]$ and let  $f_T: [0,\mathcal{D}(T)] \rightarrow \mathbb{N}$ be  the function yielding the number of clusters in  $T$, for values of $\delta \in [0,\mathcal{D}(T)]$. The other parameters $K, \epsilon$ are fixed.
We show  that the number of stay regions remains constant for values of $\delta$ ranging in properly defined intervals. That is, the function $f_T$ has a step-wise behavior. The proof is in two steps: Lemma \ref{lemma1} demonstrates that the number of stay regions that we obtain is the same if we consider the sequences of MSRs in place of stay regions. This result is next used in Theorem \ref{thmono} to prove that  the function $f_T$ is step-wise.

\begin{lemma}
	\label{lemma1}	
	Let $\mathcal{\widehat{S}}=\widehat{S_1},\widehat{S_2},..$ and $\mathcal{\widehat{S}'}=\widehat{S_1}',\widehat{S_2}',..$ the two sequences of minimal stay regions in  $T$ obtained  with $\delta=p$ and $\delta=p'$ respectively (w.r.t.  density parameters $K, \epsilon$). 
	These two sequences $\widehat{S}$ and $\widehat{S}'$ are identical iff the corresponding stay regions are identical.
	$$\mathcal{\widehat{S}'}= \mathcal{\widehat{S}} \Leftrightarrow  \mathcal{S}' = \mathcal{S}$$
\end{lemma}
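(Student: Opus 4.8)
The plan is to prove the two implications separately, leaning on the fact that every full stay region is obtained from its MSR by the SeqScan expansion, whose only dependence on $\delta$ is through the positions of the MSRs. I would fix $K,\epsilon$ and assume without loss of generality $p\le p'$; by the stated monotonicity of the minimum-presence constraint, any cluster qualifying as a stay region for $\delta=p'$ also qualifies for $\delta=p$, so each minimal seed can only grow as the threshold increases. This monotone picture is what I would use to line up the two runs index by index.

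For $\mathcal{\widehat{S}}=\mathcal{\widehat{S}'}\Rightarrow\mathcal{S}=\mathcal{S}'$ I would induct on the region index $i$, tracking the state of SeqScan. The active region is grown from its MSR by the Context/\emph{canExpand} mechanism, which tests only DBSCAN reachability (a function of $K,\epsilon$), and it is closed precisely when the next MSR is detected in the Pool. Assuming $\widehat{S}_i=\widehat{S}_i'$ and, inductively, that both runs reach this point with identical Context at the same scan index, the expansions append exactly the same points and halt at the same step, namely when the common $\widehat{S}_{i+1}=\widehat{S}_{i+1}'$ is created; hence $S_i=S_i'$ and the hypothesis carries to $i+1$, the base case being the creation of the first MSR. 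The one thing to state carefully is that the closing rule references $\delta$ only indirectly, through the formation of the subsequent MSR, so agreement of the MSR sequences forces agreement of the full segmentations.

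For the converse $\mathcal{S}=\mathcal{S}'\Rightarrow\mathcal{\widehat{S}}=\mathcal{\widehat{S}'}$ I would use the characterisation of $\widehat{S}_i$ as the minimal-length initial sub-cluster of the now-common segment $S_i=S_i'$ whose presence first reaches the threshold. Since presence accumulates monotonically as points are appended along $S_i$, the successive candidate seeds have non-decreasing presence values, and the MSR is pinned down by the first of these values that meets the threshold. The plan is then to show that $p$ and $p'$ select the same candidate seed, and hence the same MSR, for every $i$.

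\textbf{The main obstacle} I anticipate lies precisely in this converse: the MSR genuinely depends on $\delta$, since a larger threshold is met only later in the scan, so two different thresholds could a priori agree on the full region $S_i$ while pinning different seeds inside it. Controlling this is the crux, and I would handle it by restricting to the regime in which the lemma is actually invoked in Theorem \ref{thmono} — values of $\delta$ confined to a single step of $f_T$ — and arguing that over such a range no candidate seed of any $S_i$ carries a presence value strictly separating $p$ from $p'$; equivalently, the instants at which seeds are created do not move, so the MSR sequences cannot change. I expect this ``no intermediate presence value'' argument, rather than either the induction or the monotonicity bookkeeping, to demand the most care.
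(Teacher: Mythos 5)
Your treatment of the forward implication $\mathcal{\widehat{S}'}=\mathcal{\widehat{S}}\Rightarrow\mathcal{S}'=\mathcal{S}$ is sound, but it takes a different route from the paper: you induct on the region index through the operational state of SeqScan (Context, Pool, closing rule), whereas the paper argues directly and non-algorithmically that every point of $S_i$ is density-reachable from $\widehat{S_i}=\widehat{S_i}'$ and vice versa, so the two expansions of a common seed must coincide as sets. Your version buys a tighter link to Algorithm \ref{algoSeq} at the cost of more bookkeeping (you must also argue that the two runs close region $i$ at the same scan index, which you do via the common $\widehat{S}_{i+1}$); the paper's reachability argument is shorter and independent of the scan order. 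Either is acceptable for this direction.

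The genuine gap is in the converse, and it is instructive that you and the paper diverge in opposite directions there: the paper dismisses $\mathcal{S}'=\mathcal{S}\Rightarrow\mathcal{\widehat{S}'}=\mathcal{\widehat{S}}$ as trivial, while you correctly observe that it is not --- the MSR of a fixed segment $S_i$ genuinely depends on $\delta$, since a larger threshold is first met at a later scan instant and hence pins a longer seed (e.g.\ a single dense stay region $[1,10]$ with unit time steps and $K=4$ has MSR $[1,4]$ for $\delta=0$ but $[1,5]$ for $\delta=3.5$, even though the full stay region is $[1,10]$ in both cases). However, having identified the crux, you do not close it: your proposed repair is to restrict $p,p'$ to a single step of $f_T$ and argue that no candidate seed has a presence value strictly between them. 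That restriction is not part of the lemma's hypotheses, so what you would prove is a weaker statement; worse, the ``no intermediate presence value'' claim on a step of $f_T$ is essentially the content of Theorem \ref{thmono}, which is proved \emph{from} this lemma, so invoking it here is circular. The honest resolutions are either to prove only the forward implication (which is all Theorem \ref{thmono} uses) or to add the step-interval hypothesis explicitly to the lemma; as written, your proposal establishes neither the stated biconditional nor a non-circular substitute for its converse.
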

\begin{proof} \emph{
	We prove the implication $\mathcal{\widehat{S}'}= \mathcal{\widehat{S}} \Rightarrow  \mathcal{S'} = \mathcal{S}$ (the other way is trivial).  Consider for a generic index $i$, the equality $\widehat{S_i}'=\widehat{S_i}$. By definition of minimal stay region, every element of $ S_i$ is reachable - in the DBSCAN sense - from $ \widehat{S_i}$. Thus every element of $S_i$ is also reachable by $\widehat{S_i}'$. Similarly  every element of  $ S_i'$ is reachable from $\widehat{S_i}$. Thus the two stay regions  $S_i'$ and $S_i$ are identical. $\diamond$} 
\end{proof}

The next Theorem shows that the number of stay regions remains constant for values of $\delta$ ranging in well-defined intervals. This provides the ground for the  computation of $f_T$.
\begin{theorem}
	\label{thmono}	
	Consider a trajectory $T$. Let $S_1,..,S_m$  be the sequence of stay regions obtained with parameter $\delta=p$ and  $p_1,..,p_m \geq p$ the presence value in the MSRs  $\widehat{S}_1,..,\widehat{S}_m$. It holds that:
	\begin{equation}
	\forall p' \in [p, \min_{i\in [1,m]} {p_i}], f_T(p')=f_T(p)
	\end{equation}
\end{theorem}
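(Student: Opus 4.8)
The plan is to reduce the statement, via Lemma \ref{lemma1}, to a claim about minimal stay regions. Lemma \ref{lemma1} asserts that two runs of the procedure yield identical stay-region sequences exactly when they yield identical MSR sequences, and since $f_T$ merely counts stay regions, it suffices to prove that for every $p' \in [p, \min_{i} p_i]$ the sequence of MSRs obtained with $\delta = p'$ coincides with the one obtained with $\delta = p$. I would therefore spend the proof establishing this coincidence of MSR sequences and then invoke Lemma \ref{lemma1} to transfer it to the stay regions themselves, whence the equality of counts is immediate.

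The crucial structural observation is that the threshold $\delta$ enters the SeqScan execution at a single place, namely the test findMSR$(P)$ deciding whether some cluster in the current pool has attained presence $\geq \delta$. Both the incremental DBSCAN clustering of the sets $C$ and $P$ and the expansion test canExpand (a pure reachability condition) are independent of $\delta$. Consequently, as long as two runs with thresholds $p$ and $p'$ have fed identical point sequences to $C$ and $P$ up to a given scan index, their cluster structures and all presence values agree, and the two runs can only ever diverge at a moment where findMSR returns a nonempty answer for one threshold but not the other.

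I would then rule out such divergence by induction on the epochs between consecutive MSR creations, using the run with $\delta = p$ as reference. Suppose the two runs agree up to the creation of $\widehat{S}_i$. During the ensuing search for $\widehat{S}_{i+1}$, the reference run creates no MSR until a pool cluster first reaches presence $\geq p$; hence at every earlier instant all pool-cluster presences are strictly below $p$, a fortiori below $p' \geq p$, so the run with $\delta = p'$ likewise declares no MSR and stays in lockstep. At the creation instant of $\widehat{S}_{i+1}$ the relevant presence increases in one discrete jump from a value $< p$ to $p_{i+1} = \mathcal{P}(\widehat{S}_{i+1})$; because $p' \leq \min_{i} p_i \leq p_{i+1}$, this same jump crosses $p'$ as well, so the run with $\delta = p'$ creates an MSR at the identical instant. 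The selection rule picks the minimal-length, first-created qualifying cluster, and raising the threshold only shrinks the qualifying set while keeping $\widehat{S}_{i+1}$ inside it, so the selected region is unchanged. This closes the induction, the base case being the same first-crossing argument applied to $\widehat{S}_1$ with the previous active region taken empty.

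The main obstacle I anticipate is precisely this discrete-jump step: one must argue carefully that crossing $p$ and crossing $p'$ occur at the very same point insertion, which hinges on presence being a step function that jumps past both thresholds at once and on the upper bound $\min_{i} p_i$, and that the minimal-stay-region selection is insensitive to shrinking the qualifying set. Without these two facts a cluster whose presence lands in the gap $[p, p')$ could trigger an MSR under $p$ but not under $p'$, breaking the lockstep and invalidating the induction.
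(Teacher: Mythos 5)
Your proposal is correct and follows essentially the same route as the paper's proof: both reduce the claim to showing that the sequence of minimal stay regions is unchanged for any $\delta = p'$ in the given interval (no MSR is filtered out since $p'$ does not exceed any $p_i$, and no new MSR can appear since $p' \geq p$), and then invoke Lemma \ref{lemma1} to transfer the identity to the stay regions themselves. Your induction on the epochs between MSR creations and the discrete-jump observation simply make explicit the lockstep argument that the paper states more informally.
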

\begin{proof}
	\emph{
	Let $p' \in (p, \min_{i\in [1,m]} {p_i}]$ (otherwise the case is trivial). 
	We want to prove that $f(p')=f(p)$.
	We recall that the SeqScan algorithm scans the trajectory
	until a minimal stay region is found, hence such a cluster is expanded until a new,
	spatially separated MSR is detected. As the value $p' > p$ is lower
	than the presence in any minimal stay region, none of the  stay regions in $f_T(p)$ is filtered out. Similarly, no additional minimal stay region can be found. The sequence of minimal stay regions is identical, and thus  for Lemma \ref{lemma1} also the sequences of stay regions, i.e. $f_T(p)=f_T(p')$ that is what we wanted to demonstrate.}
\end{proof}

\begin{algorithm}
	\caption{Computing the function $f_T$} 
	\label{algo:f}	
	\begin{algorithmic}		
		\Procedure {$f_T$}
		{In: $T=[1,n], \epsilon,K$, step; Out: SetOfPairs}
		\State \textit {min $\gets$ 0, SetOfPairs $ \gets \emptyset$}
		\State \textit{$\delta \gets 0$}
		\While {$min \neq$ -1} 
		
		\State \textit{$SeqScan(T, \epsilon, K,\delta, S=[S_1,..,S_m])$}
		\If {\textit{S $\neq \emptyset$ }}
		\State \textit{$min \gets$ minumum presence of minimal stay regions from $[\widehat{S}_1,..,\widehat{S}_m]$}
		\State \textit{Add ([$\delta, min$], m) to  SetOfPairs}
		\State \textit{$\delta \gets min+ \theta$}
		\Else 
		{\textit{ min=-1}}
		\EndIf
		\EndWhile
		\EndProcedure
	\end{algorithmic}
\end{algorithm}
The function $f_T$ is given a constructive definition in Algorithm \ref{algo:f}. This algorithm  runs SeqScan multiple times with different values of the parameter $\delta$ until the number of resulting stay regions is 0.
The presence threshold is initially set to $\delta=0$. We illustrate the iterative process as follows. After the first run, SeqScan returns a sequence $S$ of stay regions, based on which, the minimum value of presence in the respective MSRs is computed. Such a value, say $v_1$, forms the upper bound of the first interval $I_1=[0,v_1]$. For values of $\delta$ falling in such interval,  the number of stay regions is $|S|$. Next, SeqScan is run with  $\delta = v_1 + \theta$ ( with $\theta >0$ is a small constant used to handle the discontinuity) to possibly determine the second interval $I_2$. The  process iterates until the terminating condition is met.


\subsection{Examples: SeqScan at work}
We conclude this section illustrating the three major typologies of patterns that can be detected by SeqScan: the linear ordering of clusters with local noise and transitions; weakly separated consecutive clusters; and overlapping non-consecutive clusters. 
The patterns are shown in Figure  \ref{case1}. The  trajectories have been generated manually.	
\begin{itemize}
	\item \textbf{Pattern 1}: linear ordering of stay regions with local noise.
	%
	The trajectory is displayed in Figure \ref{case1}.(a). 
	SeqScan is run with  parameters $\delta=0$, $\epsilon=70, K=20$. The result is shown in Figure \ref{case1}.(b). It can be seen that the segmentation correctly identifies the two clusters, the transition and some local noise associated with one of the clusters.
	
	%
	\item \textbf{Pattern 2}: weak separation of consecutive  stay regions. 
	The trajectory  in Figure \ref{case1}.(c) contains  two  clusters that are  spatially separated only for a limited period of time. In particular it can be noticed that the object moves back from the second region to the initial region. We run with the same parameters as above,  we obtain the two stay regions reported in Figure \ref{case1}.(d). The two stay regions are evidently not disjointed and this is coherent with the fact that consecutive stay regions can be weakly separated, in accordance with Corollary \ref{cor1}.

	\item \textbf{Pattern 3}: non-spatially separated  stay regions.
	The trajectory in Figure \ref{case1}.(d) exemplifies the case of an object moving back and forth between two regions. The Figure shows are 4 clusters. Of these, the non-consecutive clusters are overlapping. Coherently with Corollary \ref{cor2}, SeqScan  detects the correct sequence of clusters. 

\end{itemize}
\begin{figure} [H]
	\centering
	\subfigure[Pattern 1]
	{\fbox{\includegraphics[height=2.5cm]{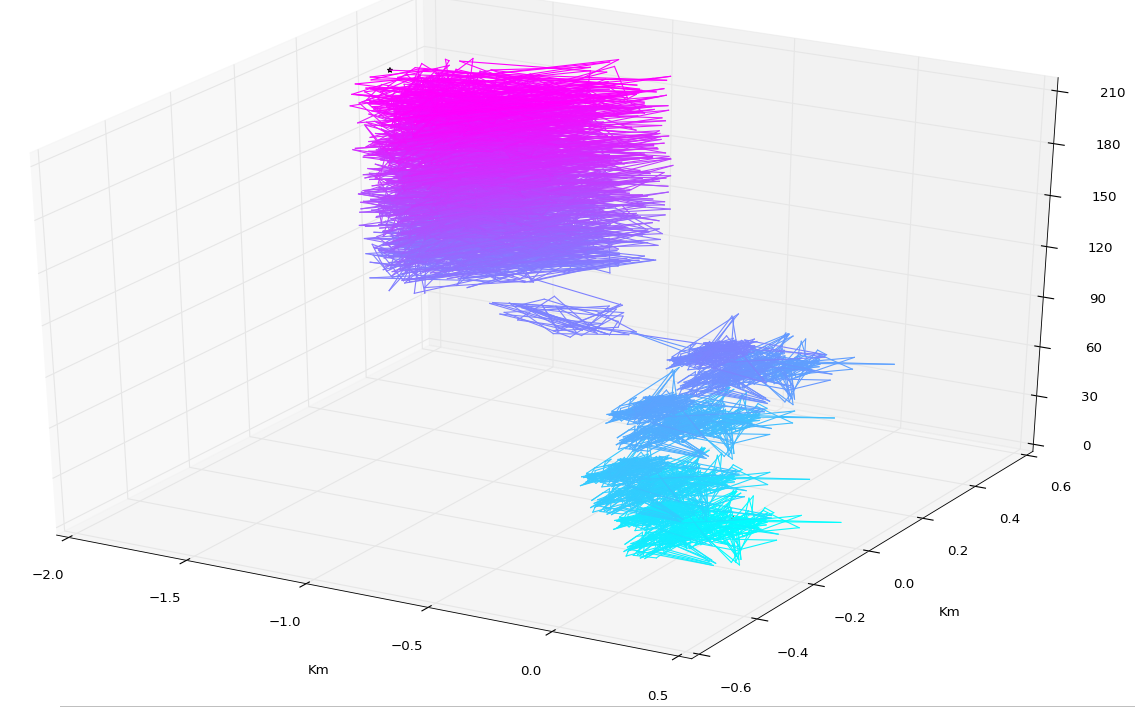}}}
	\subfigure[]
	{\fbox{\includegraphics[height=2.5cm]{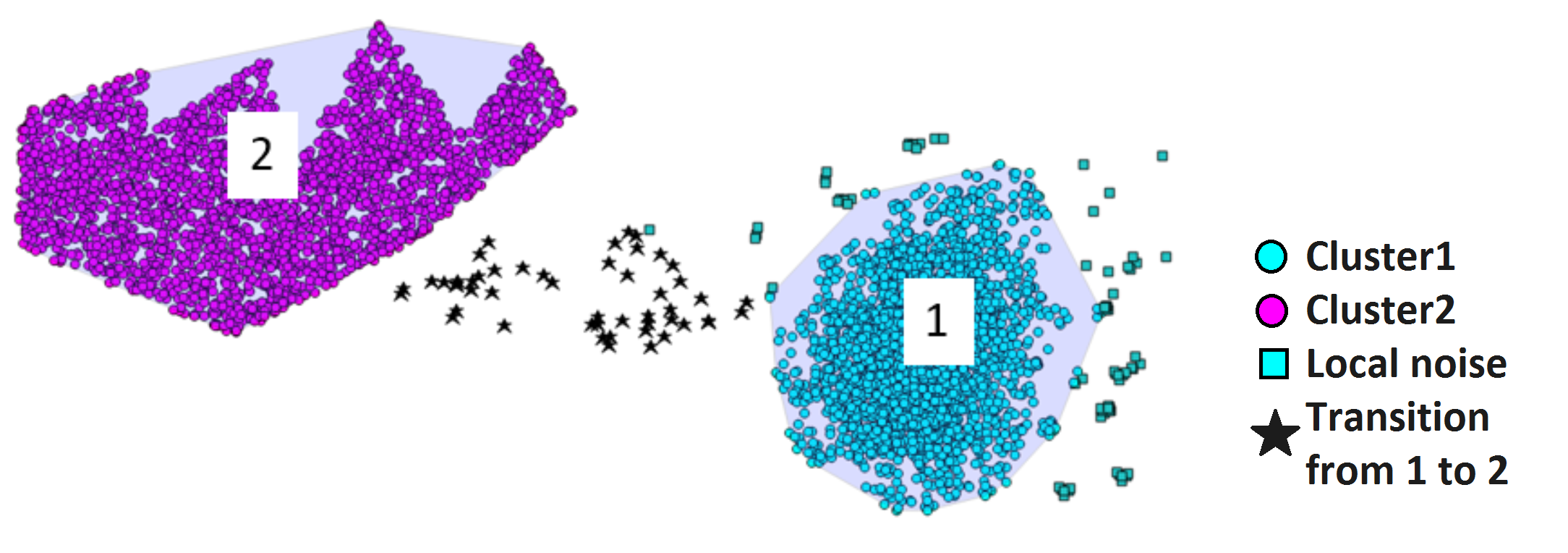}}}
	\subfigure[Pattern 2]	
	{\fbox{\includegraphics[height=3cm]{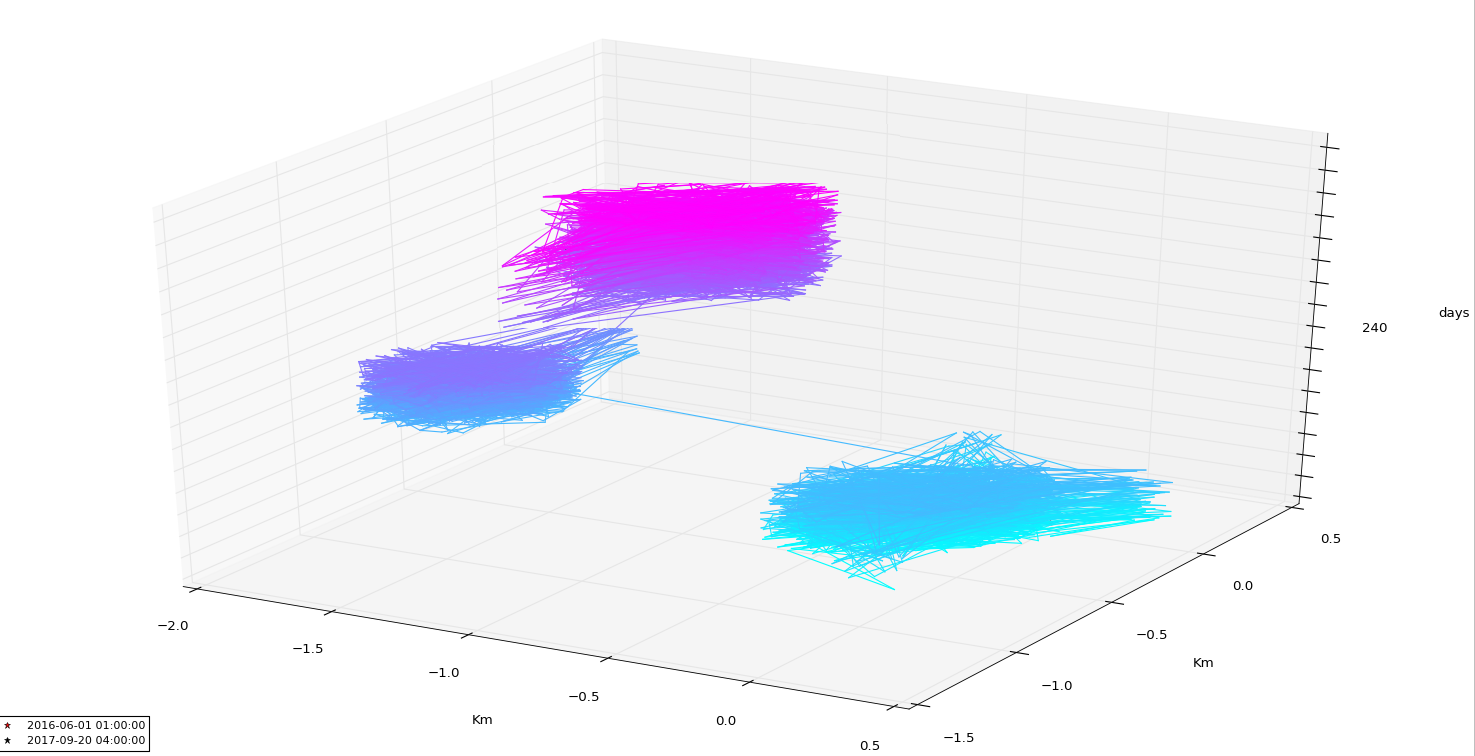}}}
	\subfigure[]	
	{\fbox{\includegraphics[height=3cm]{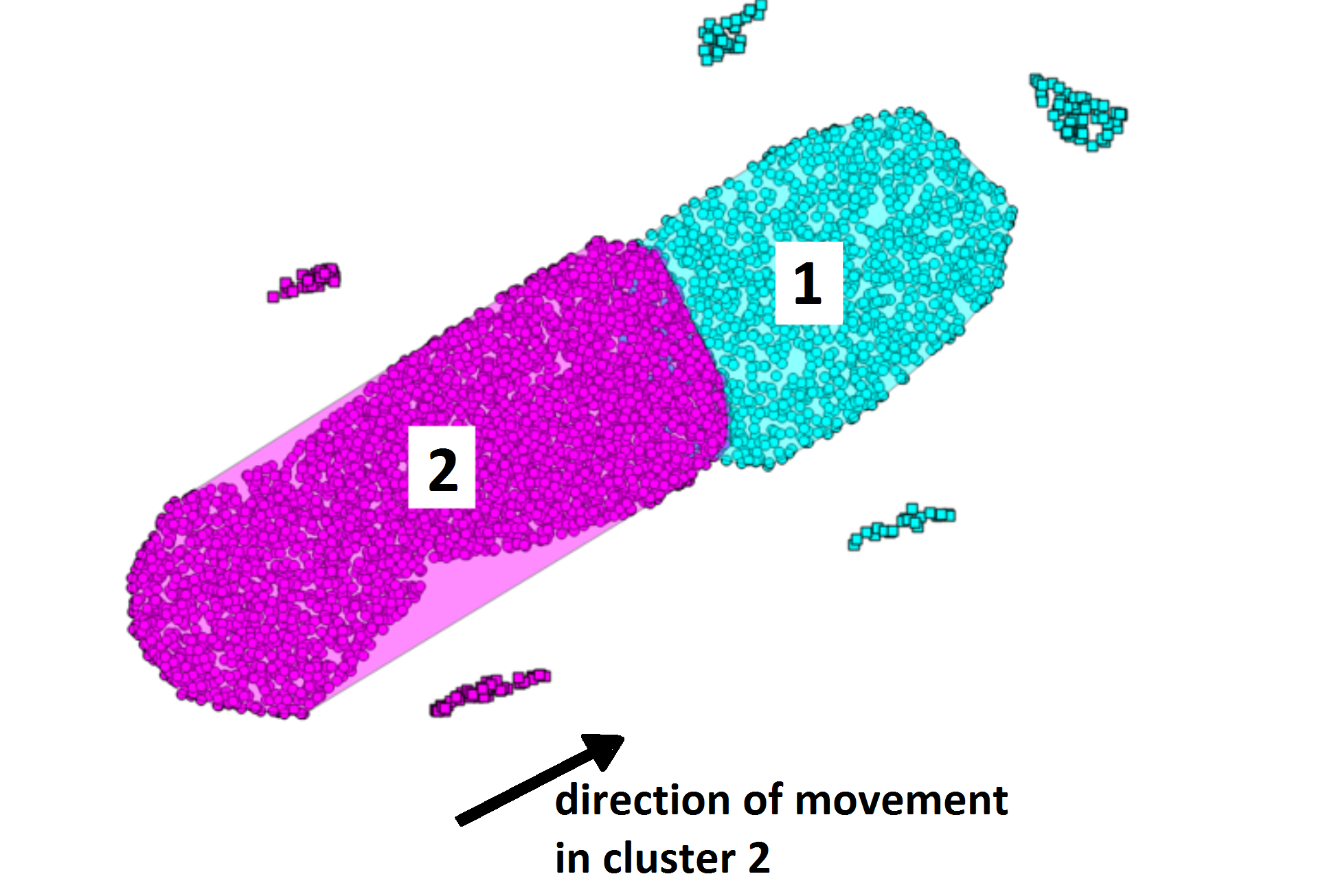}}}
	\subfigure[Pattern 3]
	{\fbox{\includegraphics[height=3cm]{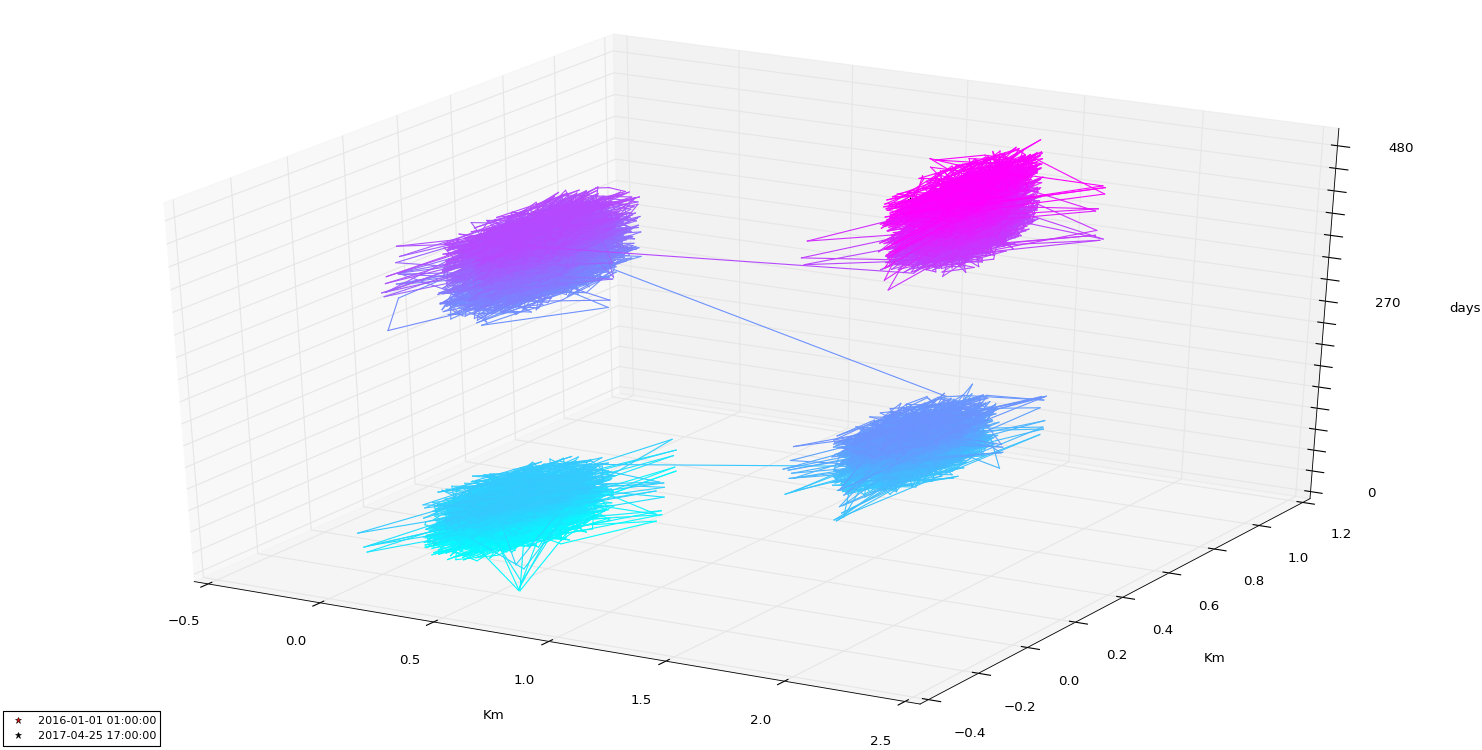}}}
	\subfigure[]	
	{\fbox{\includegraphics[height=3cm]{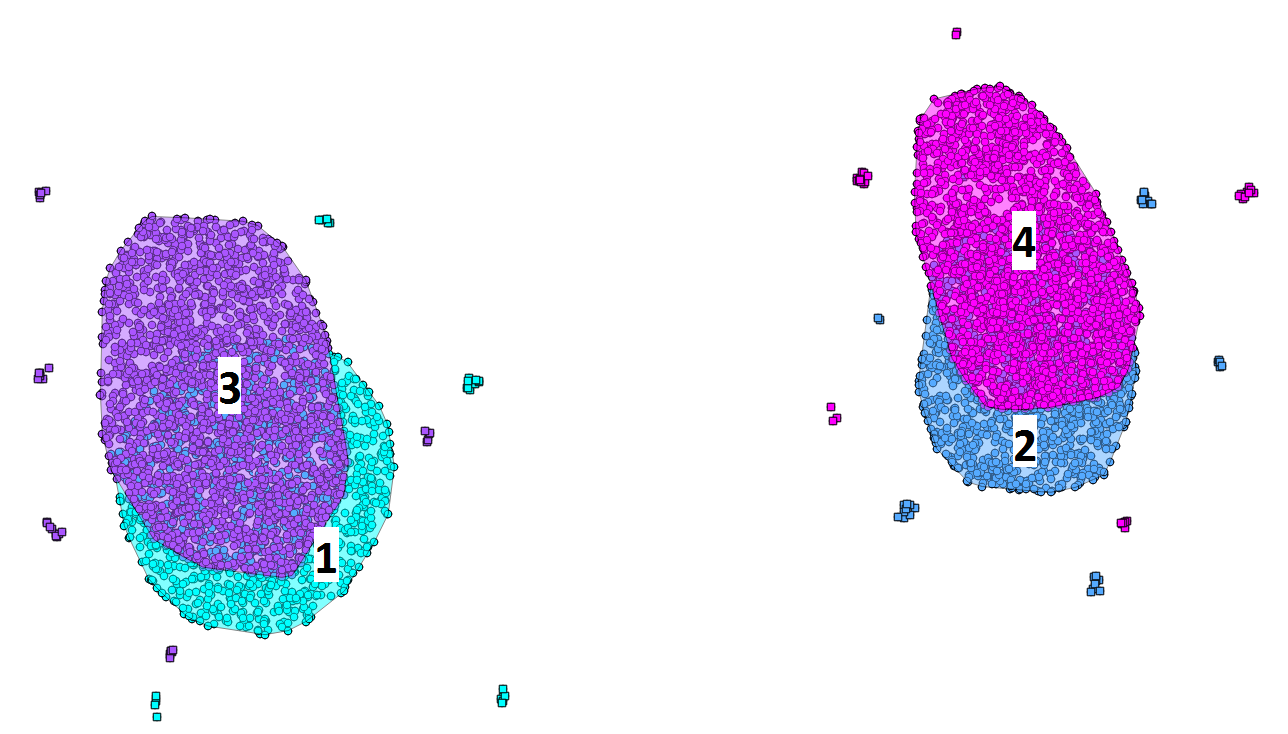}}}
	\caption{Patterns. (a,c,e) Spatio-temporal representation of the trajectory: the  vertical axis measures the temporal distance from the start of the trajectory (day unit), the color gradient the evolution in time, the space units the distance from the starting point. 
		(b,d,f) Segmentation: points are classified and displayed using a different symbology; the stay regions are enclosed in polygons to ease readability.} 
	\label{case1}	
\end{figure}

\section{Discovering derived patterns}

We argue that the SeqScan framework
can facilitate the discovery of additional mobility patterns. We refer to the patterns that are built on the notion of stay region as \emph{derived}.  
In this section, we present an approach to the discovery of recursive movement patterns \citep{Dataminingbook2014,Esa2015}. This type of movement can be broadly defined as \emph{repeated visitation to the same particular locations in a systematic manner} \citep{Esa2015}.
Detecting which and how those locations are frequented can reveal important features of the object behavior.
We focus, in particular, on the detection of locations that are  frequented regularly on a periodic basis. 
To avoid possible conflicts with the terminology used in the rest of the article, we call \emph{zones}  the 'locations' visited by an object. We split the problem in two sub-problems:  
\begin{itemize}
	\item To discover the  zones $z_i,..,z_j$ 
	\item To discover the  periodic zones, i.e., $Zone (t) = Zone(t + \mathcal{T})$, where $\mathcal{T}$ is the period and $Zone(t)$ the zone where the object is located at time $t$.
	
\end{itemize}
Coherently with the work presented so far, the overarching assumption is that the behavior may contain noise. 
\subsection{Discovery of zones}

Periodic zones are commonly modeled as spatial clusters \citep{Dataminingbook2014,Cao2007}.  
To extract these clusters,  Cao et al. use DBSCAN \citep{Cao2007}, while the MoveMine project \citep{Movemine2011} the Worton method \citep{Worton1989}. 
All of these clustering techniques ignore time, i.e. are \emph{spatial-only}.
\begin{figure}[H] 
	\center
	\subfigure[]{
		\includegraphics[height=4cm]{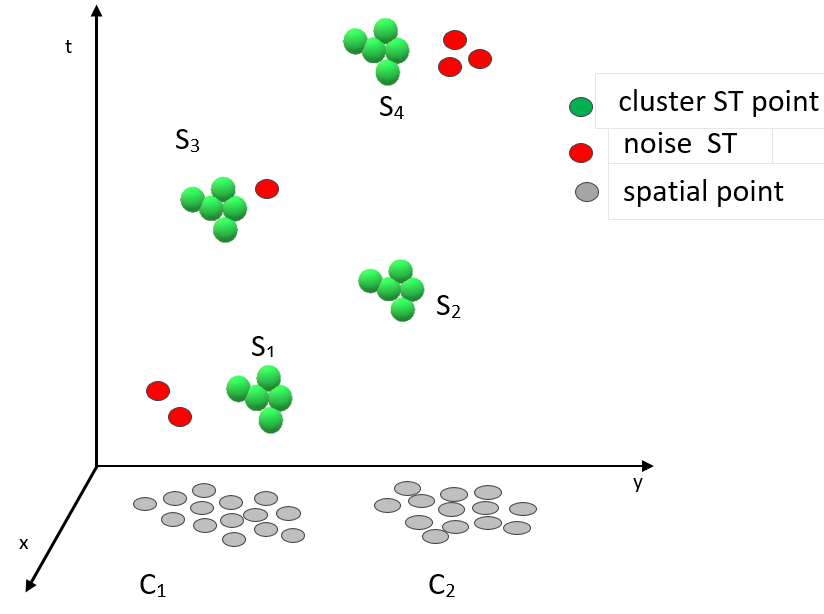}}
	\subfigure[]{
		\includegraphics[height=4cm]{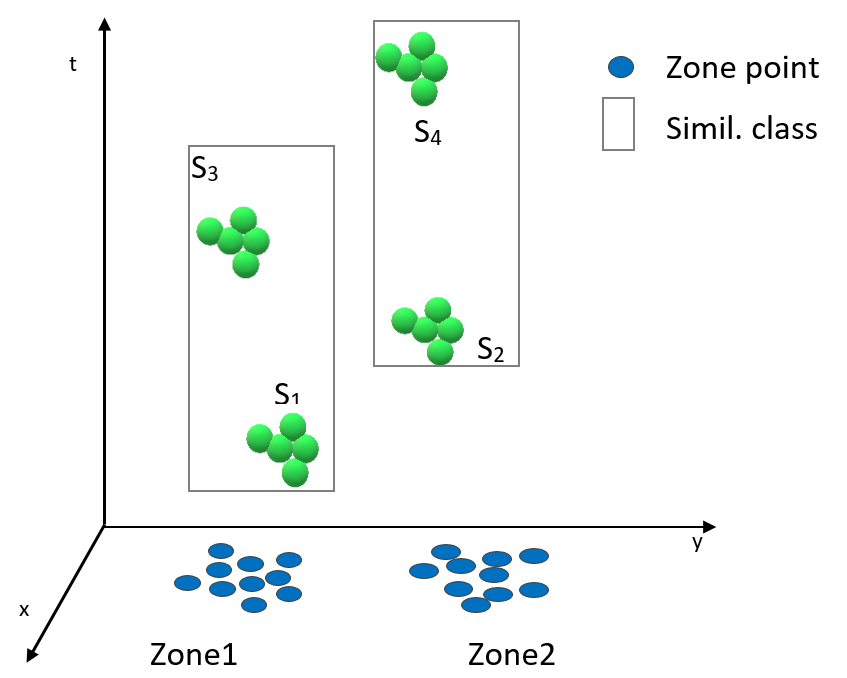}}
	\caption{Spatial-only vs. spatio-temporal clustering. (a) Spatial-only clusters as projection of  spatio-temporal clusters with noise; c) Spatio-temporal clusters grouped in similarity classes.}
	\label{palline}
\end{figure}
We argue that the use of spatial-only clustering, in place of spatio-temporal clustering, may result into a  rough approximation that impacts the quality of the analysis. 
To begin, we observe that  objects spend some time inside a zone.  Therefore, if the location is sampled at a frequency that is  relatively high with respect to the time spent inside a zone, a visit results in a dense set of sample points. While such a set can be straightforwardly modeled as a cluster, 
it is highly unlikely that the clusters at  different times are perfectly identical. 
That suggests modeling a frequented location as \emph{set} of clusters. 
This change of perspective has important implications. For example, it can be shown that spatial-only clustering  can generate clusters including points that in reality represent noise. An example can better explain the problem. 
Consider the points of a trajectory projected on  plane and assume that these points form the clusters $C_1, C_2$  as shown in Figure \ref{palline}.(a). These clusters appear compact, i.e., no noise.
In reality, there are  4  agglomerates, i.e. spatio-temporal clusters,  
along with a few noise points.  These agglomerates  are pairwise close to each other,  thus,
once projected on plane, collapse in a unique spatial cluster, which absorbs the noise points. As a result the noise information is lost.  Clearly, that can be avoided taking into account time. 
Another important reason for using spatio-temporal clusters, in particular stay regions, in place of spatial-only clusters, is that the individual movement can be given a discrete sequential representation, which can be more easily manipulated. 

In the light of these considerations, we propose the following approach: to extract the sequence of stay regions and then group together the stay regions that are close to each other, based on a properly defined notion  of proximity, to finally associate each such groups a \emph{zone}. We recall  that, based on Corollary \ref{cor2}, two  stay regions can \emph{overlap}. We need, however, a more restrictive and noise independent notion of cluster proximity, therefore we introduce the concept of \emph{spatial similarity} (of clusters). Similar stay regions form a \emph{zone}.  
In the following we detail the process starting from  the notions of  spatial similarity and zone.\\ 

\noindent
\textbf{Spatial similarity of clusters.}  
We say that two stay regions $S_1, S_2$ are spatially similar if there is at least one core point of $S_1$  that is \emph{directly reachable} from $S_2$ (in the DBSCAN sense), or viceversa, there is at least one core point of $S_2$  that is \emph{directly reachable} from $S_1$. 
We quantify the spatial similarity as the maximum percentage of core points that are directly reachable from the core points of the other set. 
More formally:
let $O_1 (O_2)$ be the set of core points in stay region $S_1 (S_2)$ falling in the $\epsilon$-neighborhood of some core point in $S_2$ ($S_1$).
We define the function of spatial similarity $Sim(S_1, S_2)$  as follows: 
\begin{equation}
Sim(S_1, S_2)=\max { \{\frac{|O_1|}{|S_1|},\frac{|O_2|}{|S_2|} \}}
\end{equation}
The two stay regions are spatially similar if
$Sim(S_1, S_2) \geq \psi$, where $\psi \in [0,1]$ is the similarity threshold.
Note that this notion of similarity is not affected by the relative size of clusters, in other terms the similarity can be 1, even though the clusters are of very different size.\\

\noindent
\textbf{Zones.} Similar stay regions can be grouped in classes. 
The \emph{similarity class} $C_i$ of the stay region  $S_i$ is defined recursively as follows: $C_i$ contains  $S_i$ and all of the regions similar to at least one region of the class. A similarity class is maximal, that is every stay region that can be added to the class,  belongs to the class. 
Moreover, the relation of  spatial similarity  induces a partition over the set of  stay regions. 
For every  similarity class we define the corresponding zone as follows. 
The  similarity class $C_i=\{S_i,..S_j\}$  associated with a zone $Z_i$ is the projection on space $\pi_{x,y}$ of the union set of the stay regions in $C_i$	
\begin{equation} 
Z_i= \pi_{x,y} (\bigcup_{S_j \in C_i} S_j)
\end{equation}
A nice property that follows from the above definitions is that a zone is itself a  cluster. However, in contrast with spatial-only clusters, zones do not contain noise.  An example is shown in Figure \ref{palline}.(b). The 4 stay regions $S_1 \rightarrow S_2 \rightarrow S_3 \rightarrow S_4$ of the Figure can be pairwise grouped to form 2 zones, \emph{Zone1} and \emph{Zone2}.  Note that, at this stage,  we can rule out the local noise because it is not relevant for the problem at hand. Thus the trajectory can be rewritten as sequence of temporally annotated zones  
for example using the formalism of symbolic trajectories \citep{tsas2015}: $(I_1, Zone1) (I_2, Zone2) (I_3, Zone1) (I_4, Zone2)$.
As a result, we obtain a simple and compact representation of the trajectory.

\subsection{Discovery of periodically visited zones}

The zones may be be visited periodically.  The period, however, is not  known, thus can range between  1 and $n/2$ where $n $ is the length of trajectory, moreover it can be imprecise.
To our knowledge the only approaches dealing with the periodicity of locations are built on spatial-only clustering \citep{Dataminingbook2014,Cao2007,Movemine2011}. 
For the analysis of location periodicity, we propose to leverage the symbolic representation of the trajectory obtained at the previous step, map it onto a time series and use a technique for the periodicity analysis of symbolic time series with noise. Specifically, we utilize the WARP technique  \citep{Elfeky2005}\footnote{The implementation of the algorithm has been kindly provided by M. Elfeky, co-author of WARP \citep{Elfeky2005}. Another implementation is available on: \url{//github.com/Serafim-End/periodicity-research}}.\\ 
\\
\noindent
\emph{WARP.} Consider a time series $T=[x_0,x_1,..,x_{n-1}]$  of $n$ elements. The key idea underlying WARP is that if we shift the time series of $p$ positions and compare the original time series to
the shifted version, we find that the time series are very similar, if $p$ is a candidate period \citep{Elfeky2005}. Therefore the greater the number of matching symbols, the greater the accuracy of the period. For  every possible period $p \in[1, n/2]$, WARP computes the similarity between the time series $T$ and the time series shifted $p$ positions, $T^{(p)}$ using Dynamic Time Warping (DTW) as similarity metric. The underlying distance function measures whether two symbols are identical or not. The value of the distance $DTW(T, T^{(p)})$ ranges between $0$ and $n-p$, where $0$
indicates that the time series of length $p$ is perfectly periodic.  
The confidence of a period $p$ is defined as: $ 1- \frac{DTW(T, T^{(p)})}{n-p}$. \\

\noindent
\emph{Process.}
We obtain the time series from the symbolic trajectory resulting from the previous phase as follows. First, we specify the temporal resolution of the time series, e.g. week, year. Next, for every temporally annotated zone, we create a sequence of repeating symbols, one per time unit. We repeat the same process with the \emph{transitions}, which are assigned a system-defined symbol.  The resulting time series is given  in input to WARP, which returns the candidate periods for every period $p$ along with the confidence value. The periods with high confidence are those of interest. 
An application of the method will be shown in the next section.

\section{Experimental evaluation of SeqScan}
\label{sec2}

We turn to discuss the process of SeqScan validation. The methodology consists of two phases: 
\begin{itemize}
	\item [(a)]
	External evaluation of SeqScan. We  confront the clustering with ground truth where the ground truth consists  of  synthetic trajectory data; 
	\item [(b)] Evaluation of the  technique for the discovery of periodic locations. We compare our approach with the solution developed in the MoveMine project
	\citep{Movemine2011}, based on a real dataset.
\end{itemize}  
For the efficiency aspects, we refer the reader to 
earlier work \citep{Damiani:2014}. 
The experiments are conducted using
the MigrO environment, a plug-in
written in Python for the Quantum GIS
system\footnote{http://www.qgis.org} providing a number of
functionalities for SeqScan-based analysis, 
including visualization tools \citep{Damiani2015}.
For the statistical tests, we use the R system\footnote{https://www.r-project.org/}. The hardware platform consists of 
a computer equipped with Intel i7-4700MQ, 2.40GHz processor with 8 GBytes of main memory.


\subsection{Part 1: external evaluation of SeqScan}

In general, the ground truth consists of labeled points  where the labels specify the categories the points belong to. 
The ground truth can be either generated by a simulator  
or consist of  real data labeled by domain experts. In both cases,  ensuring a fair evaluation may be problematic: real trajectory data is commonly of low quality (e.g. missing points), while 
synthetic datasets can be engineered to match assumptions
of the occurrences and properties of meaningful clusters \citep{Kriegel2010}. For a sustainable and fair evaluation, we present a different methodology.  
We still use synthetic data, nevertheless 
the trajectories are
generated  by a simulator 
grounded on an independent model designed for the simulation of the animal movement. Moreover, 
the comparison is performed using blind experiments 
(see also \citep{Cagnacci2015} for a similar approach). The experimental setting, i.e. the synthetic dataset and the evaluation metrics used for the experiments,  is detailed in the following.\\

\begin{figure}[H]
	\begin{center}
		\subfigure[Trajectory IND1]
		{\includegraphics[width=4.5cm]{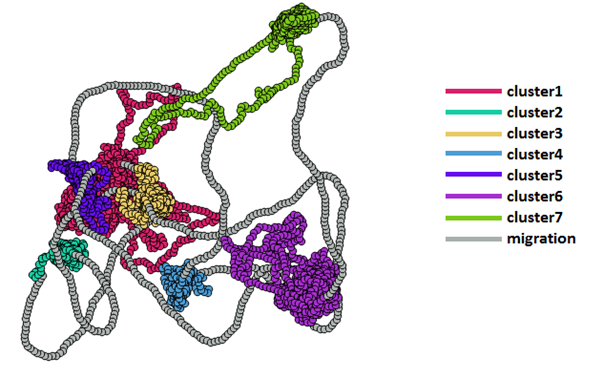}}
		\subfigure[]
		{\includegraphics[width=5.5cm]{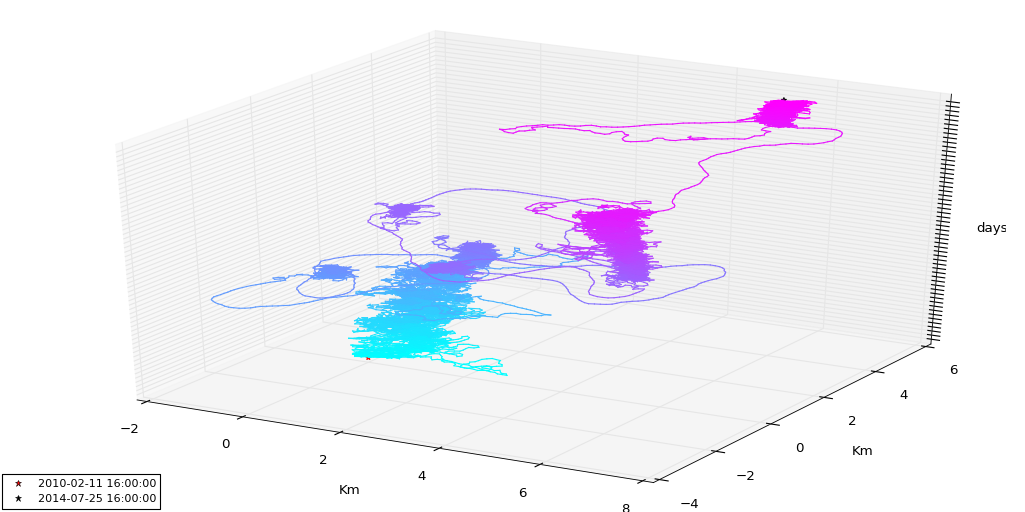}}
		\subfigure[Trajectory IND14]
		{\includegraphics[width=5.5cm]{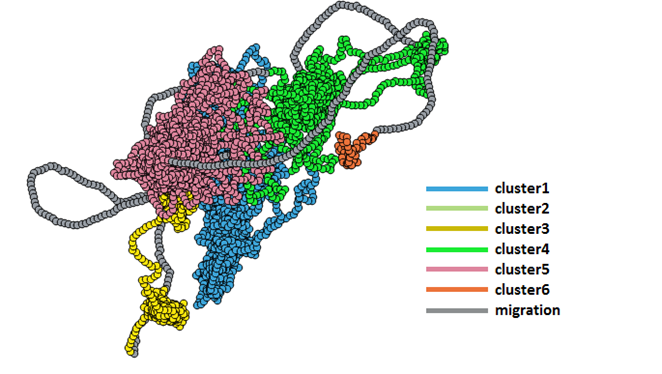}}
		\subfigure[]
		{\includegraphics[width=5cm]{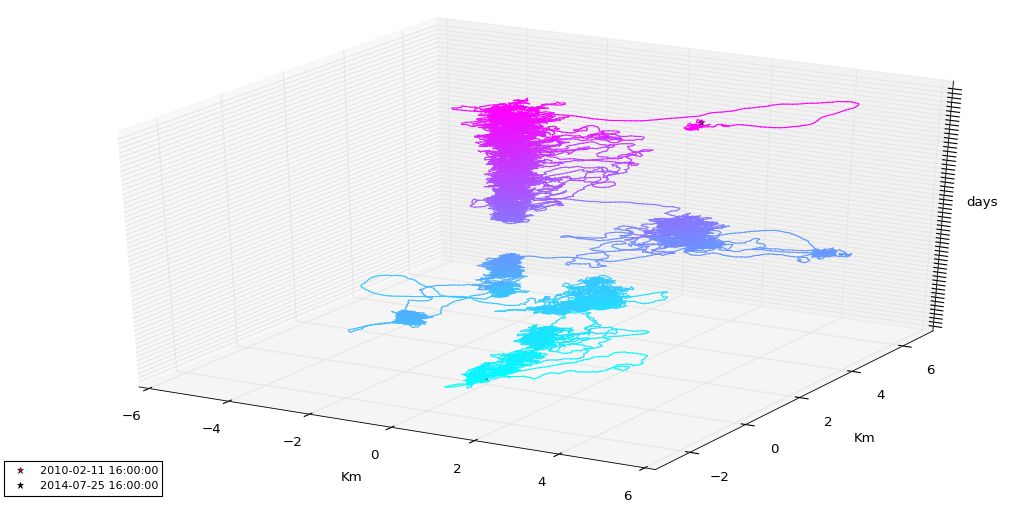}}
	\end{center}
	\caption{Examples of synthetic trajectories: (a) planar representation, (b) spatio-temporal representation. The vertical axis is for time (days).} 
	\label{groundtruth}		
\end{figure}
\noindent
\textbf{Synthetic data.} Animal trajectories are simulated as a stochastic movement process ({\it sensu}\citep{Pat1953}) in which animals move on a landscape interspersed by randomly
distributed resource patches. The stochastic movement process is for
the animal follows that of  Van Moorter et al. \citep{Van2009}, in which animals move with
in a fixed step length, with their movement directions biased towards
resource patches and (specifically with the bias was proportional to
patch attraction values, where closer resource patches to the animal’s
position were more attractive to account for the costs of
movement). In addition, following Van Moorter et al. \citep{Van2009}
each animal had a two-component spatial memory that reinforces attraction towards previously visited areas. This attraction feedback allows the movement model to capture the spatially-localized nature of movement behavior observed in many empirical animal trajectories (see \citep{Van2009} for further details). 
In order to provide sufficiently complex trajectories for the evaluation of SeqScan, we generated several behavioral modes of movement \citep{Mor2004} – namely residence, excursion  and migration  – by altering the relationship between animal movement and the food resources. Each movement mode is the result of an underlying set of stochastic movement rules that generates certain realized characteristic spatial pattern of animal relocations.

The ground truth is finally extracted from the simulated trajectories by mapping the ecological concepts  onto the concepts of our model.
The process has been applied to create a  dataset of 12 spatial  trajectories of 19,500 points each, with labeled points indicating stay region, transition, local noise, and time interval of  2 hours. This dataset, called hereinafter 'animal dataset', is the ground truth.   Two examples trajectories are shown in Figure \ref{groundtruth}. 
The full set of trajectories is reported in Appendix while summary statistics on the distribution of points and the number of clusters in each trajectory are reported in Table \ref{tab:stat-12}.

\begin{table}
	\centering
	\small{
		\caption{Point classification in  the ground truth}
		\label{tab:stat-12}
		\begin{tabular}{|>{\bfseries}L{1cm}|>{\bfseries}C{2cm}|C{2cm}|C{2cm}|C{2cm}|}
			\hline
			Traj-Id & number of clusters & \textbf{\%  clustered points} & \textbf{\% local noise } & \textbf{\% transition points} \\
			\hline
			Ind6 & 7 & 87.06 & 10.49 & 2.45 \\
			\hline
			Ind41& 7 & 91.09 & 6.50 & 2.41 \\
			\hline
			Ind1& 7 & 92.06 & 4.80 & 3.14 \\
			\hline
			Ind35 & 6 & 85.27 & 12.44 & 2.28 \\
			\hline
			Ind10 & 6 & 85.85 & 12.14 & 2.00 \\
			\hline
			Ind39 & 6 & 86.45 & 11.57 & 1.97 \\
			\hline
			Ind14 & 6 & 86.57 & 11.54 & 1.88 \\
			\hline
			Ind25  & 6 & 86.23 & 11.36 & 2.40 \\
			\hline
			Ind17 & 6 & 86.61 & 11.26 & 2.13 \\
			\hline
			Ind12 & 6 & 87.41 & 10.12 &2.47 \\
			\hline
			Ind8 & 6 & 88.17 & 9.70 & 2.12 \\
			\hline
			Ind49 & 6 & 88.58 & 8.88 &  2.54 \\
			\hline
		\end{tabular}
	}
\end{table}
\paragraph{\emph{\textbf{\textbf{Evaluation metrics.}}}}
%
For deliberate redundancy, we choose two metrics from different families, set-matching and counting pairs, respectively \citep{basu2004active}.
Further, we consider a third  metric counting the different number of clusters as  simple measure of structural  similarity of segmentations.   Let  $R = \{r_i\}_{i \in [1,n]}$  be the set of $n$ 'true' clusters, 
$S = \{s_i\}_{i\in [1,m]}$ the set of $m$ stay regions detected by SeqScan, $N_s$ the total number of clustered points in the SeqScan output, and $N_r$ the total  number of clustered points in the ground truth. The metrics are described in the sequel  while their definition is reported in Table \ref{tab:eval}:

\begin{table}[H]
	\centering
	\caption{Evaluation metrics}
	
	\begin{tabular}{|l|L{7cm}|}
		\hline
		Metric & Defs\\
		\hline
		\multirow{4}{*}{H-Purity (R, S)} &Purity(R,S)= $\frac{1}{N_s} \sum_k \max_j |s_k \bigcap r_j|$\\
		&InvPurity(R,S)= $\frac{1}{N_r} \sum_k \max_j |s_j \bigcap r_k|$\\
		&H-Purity (R,S)= $\frac{2 \times Purity(R,S) \times InvPurity(R,S)}{Purity(R,S)+InvPurity(R,S)}$\\
		\hline
		\multirow{6}{*}{Pairwise F-measure (R,S)} & TP=\#pairs assigned to the  same cluster is R and S\\
		& FP=\#pairs assigned to different clusters  in R but to the same cluster in S\\
		& FN=\#pairs assigned to different clusters in S but to the same cluster in R\\
		& Precision=$\frac{TP}{TP+FP}$\\
		& Recall=$\frac{TP}{TP+FN}$\\
		& F-measure (R,S)=$\frac{2 \times Precision (R,S) \times Recall(R,S)}{Precision(R,S)+Recall(R,S)}$\\
		\hline
		Diff (R,S) & $|card(S)-card(R)|$\\
		\hline
	\end{tabular}
	\label{tab:eval}
\end{table}

\begin{itemize}
	\item [i)]  Harmonic mean of Purity 
	and Inverse Purity \citep{amigo2009comparison} 
	hereinafter denoted H-Purity. In general, 
	the Purity metric penalizes clusters containing items from  different categories, while it does not reward the grouping of items from the
	same category. By contrast, Inverse Purity rewards grouping items together, but it does not penalize mixing items from different categories. 
	The H-Purity metric mediates between Purity and Inverse Purity.  
	\item [ii)]  Pairwise F-measure. This metric represents the harmonic mean of pairwise precision and recall computed over the contigency table specifying the number of pairs that are correctly/incorrectly  classified as members of an identical/different cluster in S and R, respectively \citep{basu2004active,amigo2009comparison}. 
	\item [iii)] The third  metric,  \emph{Diff} in brief, computes the difference  between the number of stay regions and the number of clusters
	
\end{itemize}


\subsubsection{Experiments}
We present a series of 5 experiments focusing on the following two aspects: 
\begin{itemize}
	\item [-] Quantitative evaluation: we present a systematic approach to the \emph{quantitative} evaluation of the cluster-based segmentation against ground truth. 
	\item [-] Parameter sensitivity: 
	we analyze the sensitivity of SeqScan to key internal and external parameters.
\end{itemize} 
\textbf{Experiment 1: structural comparison and impact of the $\delta$ parameter. }

For every trajectory of the dataset,  SeqScan is run with parameters that only differ for the value of the presence  $\delta$, set to 20 days and 100 days respectively. In both cases the density parameters are: $\epsilon=200, K=50$. Such parameters are chosen through an iterative process. 
The resulting number of clusters  contrasted with the true number  is reported in Table \ref{tab:nb-of-clusters}.  
With $\delta=20$ days  the  number of stay regions in the trajectory segmentation  is substantially close to the number of  clusters in the ground truth while with $\delta=100$ days the number of stay regions is substantially different (the statistical significance is evaluated through the Kruskal-Wallis test \citep{Gib2003}). 
Visual analysis can provide further information.  

\begin{figure}
	\begin{center}
		\subfigure[IND1: $\delta=20$]
		{\includegraphics[height=4.5cm]{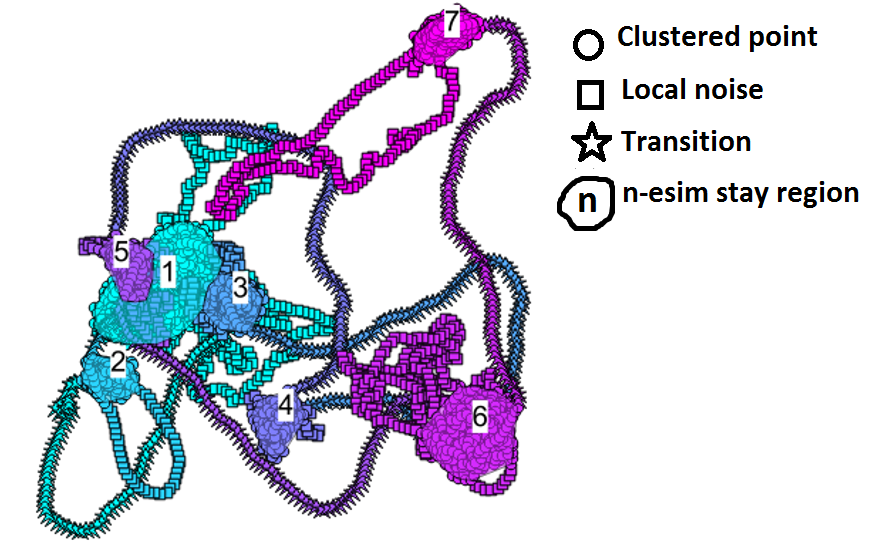}}
		\subfigure[IND1: $\delta=100$]
		{\includegraphics[height=4.5cm]{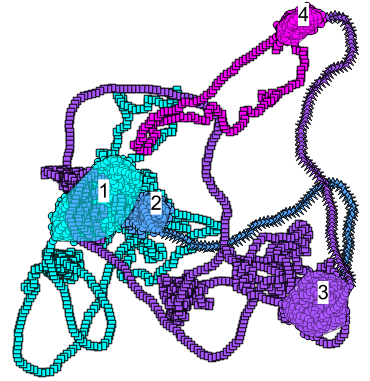}}
		\subfigure[IND14: $\delta=20$]
		{\includegraphics[height=4.5cm]{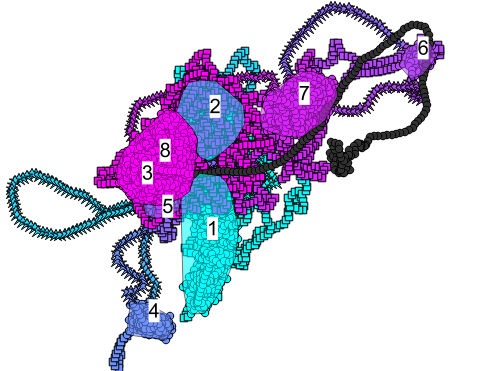}}
		\subfigure[IND14:$\delta=100$]
		{\includegraphics[height=4.5cm]{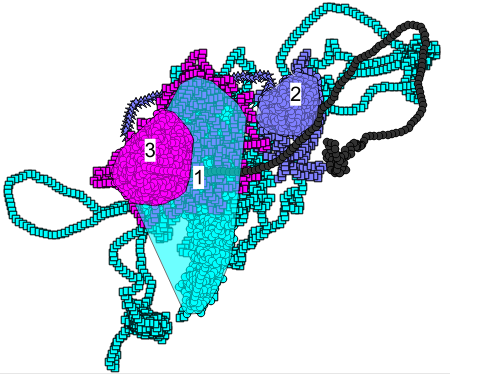}}		
	\end{center}
	\caption{Segmentation of the trajectories IND1 and IND 14 for different values of the parameter $\delta$: (a,c) $\delta=20$; (b,d) $\delta=100$. The clustered points are enclosed in a progressively numbered polygon obtained as convex hull of the set of points. }  
	\label{fig:ind1}		
\end{figure}

As an example,  Figure \ref{fig:ind1} 
illustrates the segmentation of the trajectories IND1 and IND14. 
The segmentation of the  trajectory IND1 in Figure \ref{fig:ind1}.(a) for $\delta=20$ days,  contains the same number of clusters of the ground truth in Figure \ref{groundtruth}. Moreover there is  visual evidence of good matching of the segmentation with the ground truth.  By contrast, the segmentation in Figure \ref{fig:ind1}.(b) only contains four large clusters. In this sense, the parameter $\delta$ allows for the tuning  of the temporal granularity of clusters.  In the second trajectory, IND14, the number of clusters detected by SeqScan  with $\delta=20$ days is 8 against the true 6 clusters. 
It can be noticed that SeqScan recognizes as distinct two  clusters (cluster 1 and 2) that in reality are part of a unique cluster in the ground truth.  With $\delta=100$ days, the number of clusters decreases to 3. 

\begin{table}[H]
	\centering
	\small{
		\caption{Experiment 1: Number of clusters with $\delta=20$  and $\delta=100$}
		\begin{tabular}{|C{3cm}|C{2cm}|C{2cm}|C{2cm}|}
			\hline
			\multirow{2}{*}{\textbf{Trajectory-Id}} &  \multicolumn{3}{c|}{ \textbf{Number of clusters}}\\
			\hhline{~---}
			& \textbf{Ground truth} & \textbf{SeqScan: $\delta$=20days} & \textbf{SeqScan: $\delta$=100days} \\
			\hline
			\textbf{IND1} & 7 & 7 & 4 \\
			\hline
			\textbf{IND6} & 7 & 5 & 3 \\
			\hline
			\textbf{IND8} & 6 & 7 & 5 \\
			\hline
			\textbf{IND10} & 6 & 6 & 2 \\
			\hline
			\textbf{IND12} & 6 & 4 & 3 \\
			\hline
			\textbf{IND14} & 6 & 8 & 3 \\
			\hline
			\textbf{IND17} & 6 & 7 & 3 \\
			\hline
			\textbf{IND25} & 6 & 6 & 4 \\
			\hline
			\textbf{IND35} & 6 & 8 & 5 \\
			\hline
			\textbf{IND39} & 6 & 8 & 3 \\
			\hline
			\textbf{IND41} & 7 & 6 & 4 \\
			\hline
			\textbf{IND49} & 6 & 7 & 1 \\
			\hline
		\end{tabular}
		\label{tab:nb-of-clusters}
	}
\end{table}

\paragraph{\textbf{Experiment 2: analysis of the quality indexes.}} 
%
The visual comparison performed at the previous step, though useful, does not provide any quantitative measure. To that end,  we run SeqScan with the 'good'  parameters determined at the previous step: $\epsilon=200, K=50, \delta=20$ and compute, for every trajectory, the quality indexes resulting from the comparison of the SeqScan outcome with the ground truth.   
Table \ref{tab:h-purity-diff-presence} reports the indexes H-Purity and Pairwise F-Measure for every trajectory. 
It can be seen that the two indexes are substantially aligned and that overall the quality of the clustering is high. It can be noticed however that the H-purity value of the trajectory IND14  seen earlier - the one reporting the highest structural difference -  is among the lowest in the table. This means that  clusters may lack either compactness or homogeneity, in line with the visual analysis. By contrast the quality of the segmentation of IND1 is quite high, again in line with the structural comparison.  
Overall, the indices show  a good matching  with the ground truth both at the level of structure  and of single clusters. 
\begin{table}
	\centering
	\small{
		\caption{Experiment 2:  H-Purity and Pairwise F-measure}
		\label{tab:h-purity-diff-presence}
		\begin{tabular}{|c|c|c|}
			\hline
			\textbf{Traj-Id} & \textbf{H-purity} & \textbf{Pairwise F-measure}\\
			\hline
			\textbf{IND1} & 0.98 & 0.95 \\
			\hline
			\textbf{IND6} & 0.89 & 0.73  \\
			\hline
			\textbf{IND8} & 0.90  & 0.84\\
			\hline
			\textbf{IND10} & 0.93 & 0.86 \\
			\hline
			\textbf{IND12} & 0.93 & 0.83 \\
			\hline
			\textbf{IND14} & 0.87 & 0.84\\
			\hline
			\textbf{IND17} & 0.95 & 0.89\\
			\hline
			\textbf{IND25} & 0.9 & 0.81\\
			\hline
			\textbf{IND35} & 0.91 & 0.83 \\
			\hline
			\textbf{IND39} & 0.91 & 0.85\\
			\hline
			\textbf{IND41} & 0.97 & 0.93\\
			\hline
			\textbf{IND49} & 0.96 & 0.92\\
			\hline
		\end{tabular}
	}
\end{table} 
\hspace{-0.7cm}
\paragraph{\textbf{Experiment 3: noise analysis.}}In this experiment we analyze the  contribution of the local noise  to the  quality of clustering. We recall that the unclustered points  can represent either transitions or local noise.
For this experiment, we contrast the quality of clustering  in presence of  local noise with the quality of the clustering in absence of local noise  (i.e. the local noise points are seen as elements of the clusters).  For the evaluation, we use the Pairwise F-measure because seemingly less favorable  than H-Purity. Table \ref{tab:F-measure} reports the value of the index in the two cases. The Kruskal-Wallis test  confirms the significance of the discrepancy that can be seen in the table, or, put in other terms, that the local noise has an impact on the quality of clustering.\\ 

\begin{table}
	\centering
	\caption{ Experiment 3. Pairwise F-measures in the two cases: clustering with no local noise  and clustering with  local noise, respectively. The green color highlights the greater value in each row}
	\label{tab:F-measure}
	\small{
		\begin{tabular}{|c|C{4cm}|C{4cm}|}
			\hline
			\multirow{2}{*}{\textbf{Traj-Id}} &  \multicolumn{2}{c|}{ \textbf{Pairwise F-measure}}\\
			\hhline{~--}
			&\textbf{no local noise} & \textbf{with local noise}\\
			\hline
			\textbf{IND1} & \colorbox{green!15}{0.99} & 0.95  \\
			\hline
			\textbf{IND6} & \colorbox{green!15}{0.81} & 0.73  \\
			\hline
			\textbf{IND8} & \colorbox{green!15}{0.98} & 0.84 \\
			\hline
			\textbf{IND10} & \colorbox{green!15}{0.97} & 0.86  \\
			\hline
			\textbf{IND12} & \colorbox{green!15}{0.92} & 0.83  \\
			\hline
			\textbf{IND14} & \colorbox{green!15}{0.93} & 0.84  \\
			\hline
			\textbf{IND17} & \colorbox{green!15}{0.98} & 0.89  \\
			\hline
			\textbf{IND25} & \colorbox{green!15}{0.90} & 0.81  \\
			\hline
			\textbf{IND35} & \colorbox{green!15}{0.96} & 0.83   \\
			\hline
			\textbf{IND39} & \colorbox{green!15}{0.96} & 0.85 \\
			\hline
			\textbf{IND41} & \colorbox{green!15}{0.99} & 0.93   \\
			\hline
			\textbf{IND49} & \colorbox{green!15}{0.97} & 0.92  \\
			\hline
		\end{tabular}
	}
\end{table}

\noindent
\textbf{Experiment 4: analysis of the $\delta$ parameter.}
In this experiment, we analyze the behavior of the function $f_T()$  describing the relationship between the  presence parameter $\delta$ and the number of stay regions detected by SeqScan. We recall that  the function $f_T()$ is computed by the Algorithm \ref{algo:f}  that repeatedly runs SeqScan with different values of $\delta$.  This function is implemented as part of the MigrO environment. 
In this experiment, the function is built by  running SeqScan with the usual density parameters  $\epsilon= 200$ and $K=50$ and with $\delta$ varying between 0 and the duration of the trajectory. The number of iterations  is limited to 160, i.e. SeqScan is invoked 160 times.  Figure \ref{fig:graph_ind14_10days_x160}  displays the plots of $f_T()$ for the  trajectory
IND1. 
It can be seen that 
the maximum number of clusters is obtained with $\delta=0$. 
In particular,  the segmentation of IND1  (Figure \ref{fig:graph_ind14_10days_x160}) consists of 7 clusters for $\delta$ ranging between 0 and 32 days. This is in line with the result in Table \ref{tab:nb-of-clusters} reporting the number of clusters for $\delta=20$ days. 

\begin{figure}
	\centering
	\subfigure[Traj IND1]
	{\includegraphics[width=10cm]{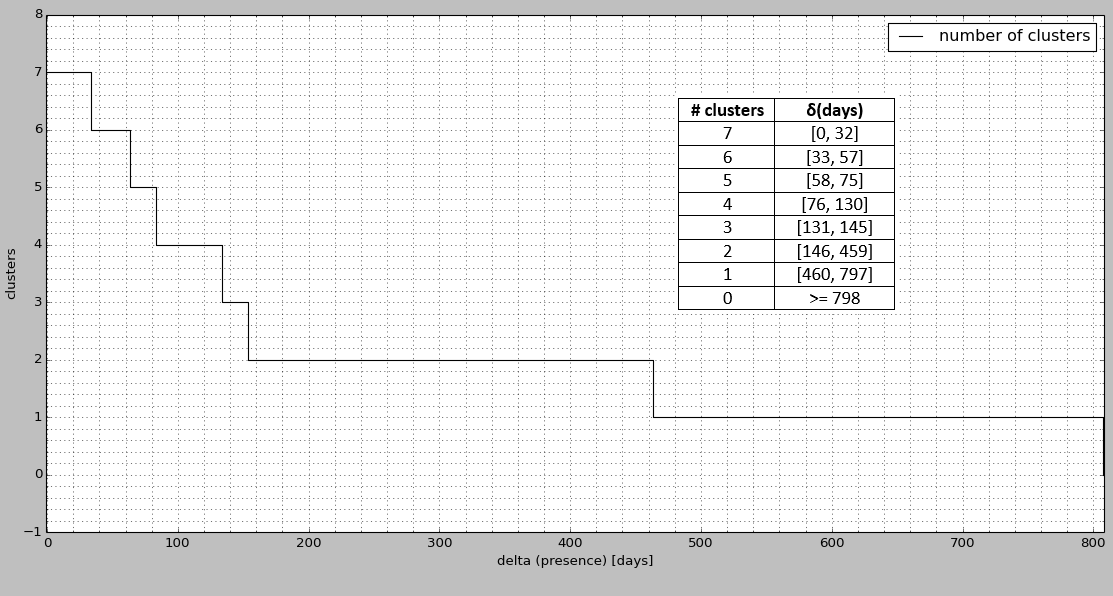}}
	\caption{Experiment 4. The function $f_T$  for the trajectory IND1. The function is reported in both graphic and tabular form.} 
	\label{fig:graph_ind14_10days_x160}
\end{figure} 

\begin{figure}[H]
	\centering
	\subfigure[Difference in $\#$clusters]
	{\fbox{\includegraphics[width=5.5cm]{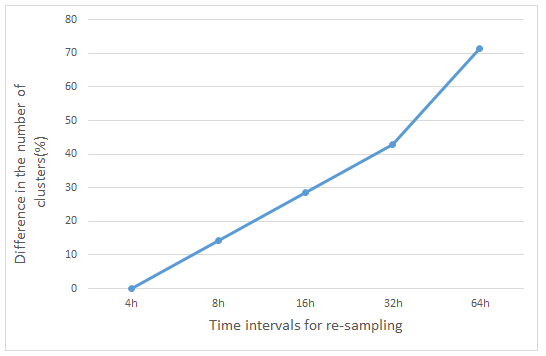}}}
	\subfigure[ Pairwise F-measure]
	{\fbox{\includegraphics[width=5.5cm]{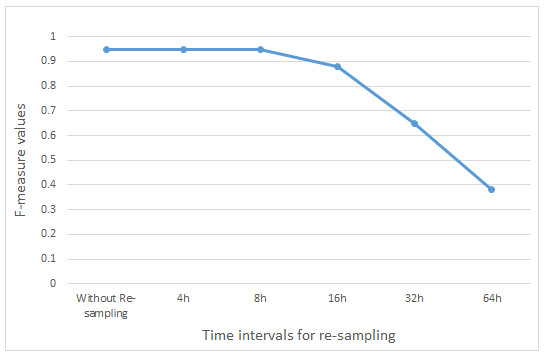}}}
	\caption{Experiment 5. Re-sampling of the trajectory IND1 (a) Normalized difference in the number of clusters (with respect to the regular trajectory); (b)
		Pairwise F-Measure computed w.r.t. ground truth} 
	\label{last}
\end{figure}

\paragraph{\emph{\textbf{Experiment 5: sensitivity to the sampling rate. }}}
For this experiment, the trajectories are re-sampled considering time intervals of 4, 8, 16, 32, 64  hours (we recall that in the animal dataset the time interval has a width of 2 hours). Next, SeqScan is run over the re-sampled trajectories and the result contrasted with the ground truth, using two of the quality indexes discussed earlier, i.e., the difference in the number of clusters with respect to the regular trajectory (normalized) and the Pairwise F-measure.  SeqScan is run with clustering parameters  $\epsilon$=200, $\delta$=20 and $K$=50 points. The corresponding graphs for one of the trajectories are reported in Figure \ref{last}. 
It can be seen that for lower sampling rates, the normalized difference in the number of clusters increases, while the quality of the cluster (i.e. F-measure) decreases. While this is the expected behavior, more interesting is the fact that the quality of clustering is not dramatically compromised if the sampling rate is reduced by 2 and even 4 times (i.e. time interval of 4 and  8 hours).  This trend can be observed also in the other trajectories. 

\subsection{Part 2: evaluation of the SeqScan-based technique for the discovery of periodic locations}
We turn to use SeqScan with real data and confront 
the solution proposed for the discovery of periodic locations with the MoveMine approach \citep{Movemine2011}. 
We use a dataset containing the GPS trajectory of one eagle observed for nearly three years while flying between US and Canada. The sample points have been collected  between mid January 2006 and end December 2008 at a sampling rate that is highly irregular. The data can be downloaded from the Movebank database \footnote{\url{//http:www.movebank.org}. Study: Raptor Tracking: NYSDEC}. Notably, this dataset is the same used in the MoveMine project. Some cleaning operations are  preliminarily performed over data. As a  result, we obtain a trajectory of 14,442 points, extending over 1080 days, from 2006 Jan 15 until 2008 Dec 30
with an average step length of 3210 meters. 
The trajectory and its spatio-temporal representation is reported in Figure \ref{exp:1}.(a) and \ref{exp:1}.(c), respectively.
%
In the following, we analyze: (a) the zones, (b) the periodicity of zones. \\
\\
\noindent
\textbf{Zones discovery.} 
The analysis is performed in three main steps: \\
\noindent
\emph{Step 1}. Compute the sequence of stay regions. We run SeqScan with parameters: $\epsilon =60 km, N=100 points, \delta=20 days$. 
We obtain 12 stay regions (numbered from 1 to 12). The stationarity index \citep{Damiani2016} 
is generally high, meaning that the local noise in the region is limited and thus the staying is 'temporally dense'. 
We recall that the  transitions and the local noise are not relevant for this kind of analysis.

%
\begin{figure} [t]
	\center
	\subfigure[The trajectory]
	{\fbox{\includegraphics[height=6cm]{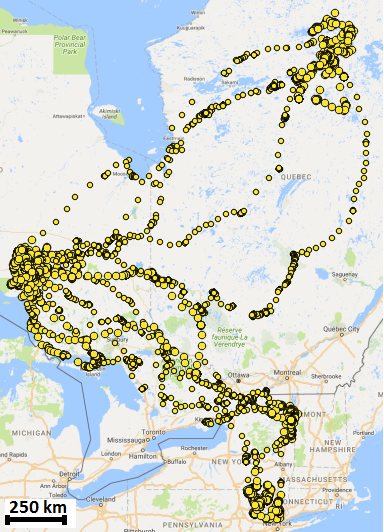}}}
	\subfigure[Stay regions 1-12 and zones]
	{\fbox{\includegraphics[height=6cm]{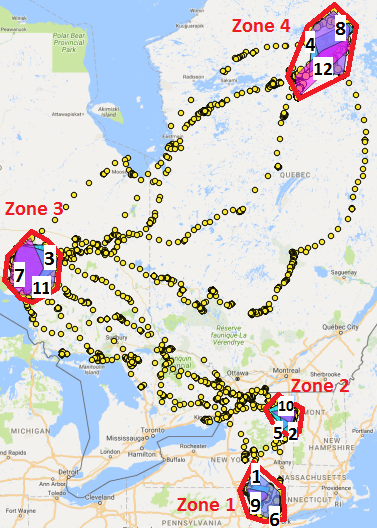}}}
	\subfigure[Spatio-temporal representation]
	{\fbox{\includegraphics[height=3.3cm]{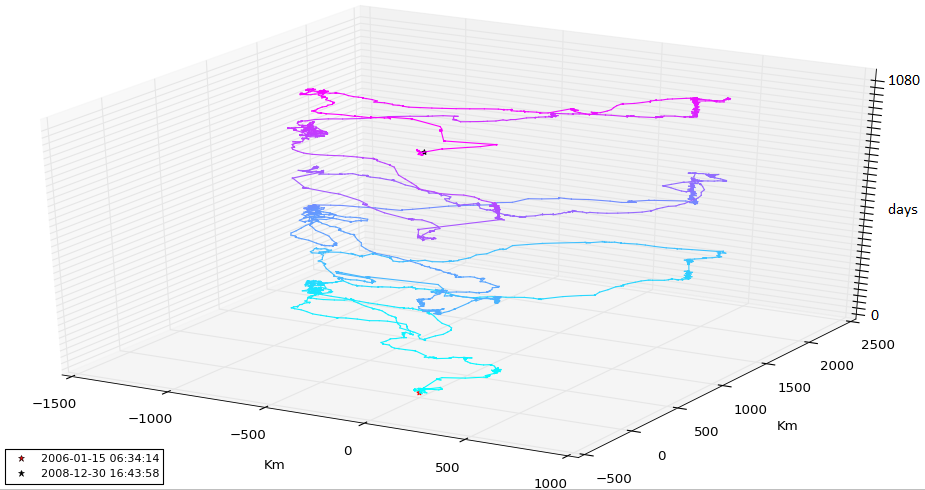}}}
	\subfigure[Spatial similarity table]
	{\fbox{\includegraphics[height=3.3cm]{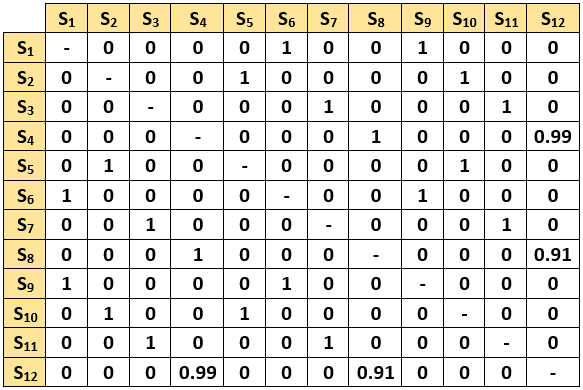}}}
	\caption{Analyzing the real trajectory of an eagle: stay regions and zones discovery.}
	\label{exp:1}	
\end{figure}
\noindent
\emph{Step 2}. Compute the similarity classes. We set the parameter $\psi=0$ and obtain four classes, each containing three stay regions: $C_1=\{1,9,6\}, C_2=\{2,5,10\},C_3=\{3,7,10\}, C_4=\{4,8,12\}$. The similarity table is reported in Figure \ref{exp:1}.(d).

\noindent
\emph{Step 3}. Compute the zones and the number of visits.
For each class we compute the corresponding zone as union set of the stay regions. The zones are denoted:  $\hat{1},\hat{2},\hat{3},\hat{4}$. 
The sequence of 12 stay regions can be rewritten in terms of zones: 
$\hat{1}, \hat{2}, \hat{3}, \hat{4}, \hat{2}, \hat{1}, \hat{3}, \hat{4}, \hat{1},  \hat{2},\hat{3}, \hat{4}$. The stay regions and the grouping in zones  are reported in Figure \ref{exp:1}.(b). 
Every zone is visited three times.
It is evident that the sub-sequence $\hat{1}, \hat{2}, \hat{3}, \hat{4}$ repeats itself, although with some irregularity. The following periodicity analysis provides further information.\\

\noindent
\textbf{Periodicity analysis.}
We analyze first the periodicity of single zones and then of the entire trajectory. The temporal resolution of time series is set to 1 week.
For each zone, we create a time series as follows. We consider the trajectory in the period between the beginning of the first visit and the end of the last visit. We split the temporal extent of the trajectory in weeks. Hence  for every week, if the object is inside the zone at any instant, we create the symbol '1', '0' otherwise. We recall that the local noise is overlooked at this stage, thus the object is assumed be continuously present inside a stay region. We run the WARP algorithm  over the time series and we select the smallest period with the highest confidence value.  
As a result, we obtain: two zones ($\hat{3}, \hat{4})$ have periods 53 and 51 weeks, respectively, with maximum confidence; the other two zones $\hat{1}, \hat{2}$ have periods 48 and 55 weeks, respectively. The  confidence of the period of zone $\hat{2}$, is the lowest (0.9) among the zones. We recall that Warp is built on the DTW distance which allows
an elastic shifting of the time axis to accommodate similar,
yet out-of-phase, segments \citep{Elfeky2005}, therefore,  a period $p$ can have maximum confidence although $p$ is not perfect.  
As we will explain in a while, we consider the set $\{\hat{1},\hat{2}\}$ as a unique zone. From the analysis of the time series created out of the sequence of zones and transitions using an appropriate number of symbols, we find that the period of the behavior is 52 weeks with maximum confidence.   Figure \ref{exp:period} illustrates the temporal sequence of zones traversed during the 3-years travel.   
\begin{figure} [H]
	\center
	\fbox{\includegraphics[width=7cm]{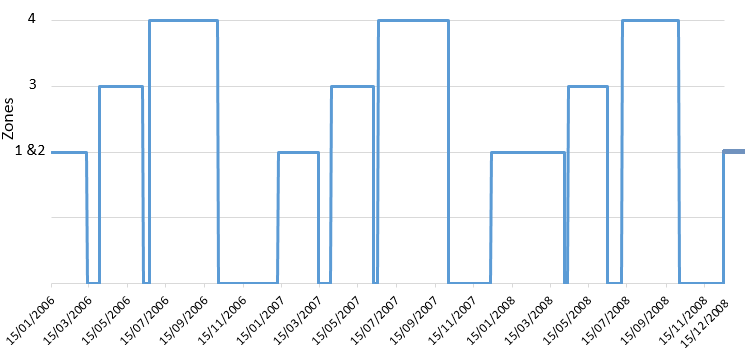}}
	\caption{The temporal sequence of zones traversed during the flight beginning on mid January 2006. The zones $\{\widehat{1}, \widehat{2}\}, \widehat{3}, \widehat{4}$ are indicated on the y-axis, time on the x-axis, valleys correspond to transitions.} 
	\label{exp:period}	
\end{figure}
%

\noindent
\textbf{Comparison.}
%
The map generated by MoveMine
(Figure 9 in \citep{Movemine2011}) highlights 3 \emph{reference spots} in the areas of New York, Great Lakes and Quebec, respectively. 
\begin{table}[H]
	\centering
	\caption{Periods comparison. The period of every MoveMine reference spot contrasted with that of the corresponding zone/s. The behavior  and its period, in the two cases, is reported in the last row. }
	\label{tab:comp}
	\begin{tabular}{C{2.5cm}|C{2.5cm}||C{3cm}|C{2.5cm}}
		\hline
		\textbf{Reference spot/behavior} & \textbf{Period (days)} & \textbf{Corresponding zone/behavior} & \textbf{Period (weeks) } \\
		\hline
		1 & 363 & $ \{\widehat{1},\widehat{2}\}$ & 51 \\
		2 & 363 & $ \widehat{3}$ & 53\\
		3 & 364 & $ \widehat{4}$ & 51\\
		1-2-3-2 & 363 & $\{\widehat{1}, \widehat{2}\}$-$\widehat{3}$-$\widehat{4}$ & 52\\
		\hline
	\end{tabular}
\end{table}
There is evidence that the \emph{reference spots} substantially match our zones. The only exception is the \emph{reference stop $\#$1} (New York area), which, in our model, covers both zone $\widehat{1}$ and  $\widehat{2}$. For homogeneity, as said above,  these two zones are considered as a unique region. 
The periods for each reference spot and corresponding zone/s, as well for the whole sequence (behavior), are summarized in Table \ref{tab:comp}. It can be seen that the results are coherent. 

%
To provide further details, the behavior of the eagle  is described in MoveMine as follows \citep{Movemine2011}: 
\emph{'This eagle stays in New York area (i.e., reference spot  $\#$1) from December to March. In March, it flies to Great Lakes area (i.e., reference spot  $\#$2) and
	stays there until the end of May. It flies to Quebec area (i.e., reference spot  $\#$3) in
	the summer and stays there until late September. Then it flies back to Great Lakes
	again staying there from mid October to mid November and goes back to New York in December'.} 
Interestingly, if we compare this behavior with our result, we can see that the sequence of reference stops is slightly different from the sequence of zones. In particular, in MoveMine,  the flight  from Quebec to the New York area, includes a stop at the area of Great Lakes, which our method seems not to recognize. Indeed, if we take a closer look, we can see that such a stop has a short duration ($<$ 20 days). 
Therefore, for how the parameters are chosen, SeqScan does not recognize such clusters as  stay regions. Note that such an observation has been made possible by the segmentation mechanism, which  discriminates between clusters and transitions, allowing for a detailed inspection of the behavior.

In summary, the two approaches appear substantially aligned. 
We emphasize, however, that there is a fundamental  difference between the two methods. MoveMine detects the reference spots as spatial-only clusters, and exploits signal processing techniques to extract from a noisy signal the sequence of temporally separated regions. Our technique does the opposite: it starts from the extraction of temporally separated regions, through the use of SeqScan, and finds  the zones. Consequently, the noise can be easily separated from the clusters at early stage, and that  simplifies the analysis. 

\section{Discussion}
In this work, we have used a research methodology that combines 
the investigation of a novel theoretical framework with an extensive validation of the technique. 
Actually, we have chosen to combine the two streams
to ensure a more robust evaluation  of the analytical framework, also in view of a possible deployment. Additional considerations:

\begin{itemize}
	\item 
	Validation strategies. 
	We have used different approaches to evaluate the effectiveness of SeqScan.
	Although not reported in this article for the sake of focus, we
	have contrasted SeqScan with two algorithms: the place detection algorithm proposed by Yu Zheng et al. \citep{zheng11} and ST-DBSCAN \citep{Birant2007}. 
	These two  techniques, however, rely on conceptual models of movement that are different from the one we refer to, therefore the comparison is unfair. Actually, the notion of local noise does not have a counterpart in any existing technique we are aware of. 
	Probably the most challenging question, with respect to validation,  is whether the proposed solution can be  effective in real applications and that motivates the concern for external validation practices \citep{Kriegel2010}. To that end, in \citep{Damiani2016} we have used a first approach where we evaluate SeqScan %
	using real animal trajectory data. The problem with real data is that if the  behavior  is inherently complex and only known  at macroscopic level (e.g. migratory behavior), domain experts may not be in the condition of classifying every point with sufficient confidence and thus the evaluation can be only conducted at a coarse level. In this sense, the use of a synthetic dataset built on an independent movement model conceptually encompassing the pattern of concern  has  dramatically improved the accuracy of the evaluation.
	
	\item Evaluation metrics. We have used  Purity and Pairwise F-measure. Yet,  these indexes  are specific for the evaluation of  traditional clustering while  the segmentation problem, we are dealing with, is somehow different. Indeed,  defining appropriate internal and external evaluation metrics for cluster-based segmentation is an open issue. A first proposal of internal indicator, called \emph{stationarity index} is presented in \citep{Damiani2016}. Applied to single clusters, the stationarity index is an estimate of the 'temporal density' in the cluster. 
	This topic will be investigated as part of future work.
	\item Generality of the proposed framework. As the external evaluation has been conducted on animal trajectories, one could raise the question on whether the scope of the solution is confined to the ecological domain. In reality, the model has been defined in a rigorous and general way, thus is prone to be applied in a variety of domains, such as human mobility analysis. 
	
\end{itemize}
The results of the evaluation process can be finally summarized as follows:
\begin{itemize}
	\item The experiments 
	show that overall the degree of matching of the SeqScan segmentation with the ground truth is  high (Tables 4-6). We recall that we have used the same set of parameters for all of the trajectories.  Therefore, it is likely that with a finer-grained tuning of the parameters, the  quality improves further.
	Importantly the ground truth is generated independently from the  clustering  while the evaluation has been conducted in a blind manner ignoring the simulation parameters. This is important for two reasons: it definitely supports the thesis that SeqScan can detect  this class of patterns; and that the evaluation is fair. 
	
	\item  For the practical application of  SeqScan, the generation of the function $f_T$, exemplified in Figure \ref{fig:graph_ind14_10days_x160}, can be extremely useful to determine a suitable set of values for $\delta$, in the same spirit of Optics \citep{Optics1999}.
	In addition the experiments show that SeqScan is  resilient to relatively low sampling rates.
	\item 
	Finally, a novel approach, grounded on the SeqScan framework, is proposed to support the discovery  of periodic locations and behaviors. The approach can compete with state-of-the-art techniques in detecting periodical behaviors with noise, while offering a flexible  and principled solution.   
	
\end{itemize}

\section{Conclusions}
To summarize, this article introduces the notion of clustering-based segmentation and presents an algorithm, SeqScan, that leverages the density-based paradigm to efficiently compute the segmentation of a trajectory based on  spatial density criteria. Moreover, the article presents an extensive evaluation of the solution, which includes the comparison  of the SeqScan clustering  with the ground truth. The resulting framework can be extended to support the discovery of additional patterns.
The trajectory dataset created as ground truth will be made publicly available.  
Additional information on the MigrO plug-in for the QGIS environment implementing the key functionalities for the SeqScan analysis is available at:
\hbox{http://mdamiani.di.unimi.it/}.

\fontsize{8pt}{12pt}\selectfont
\section*{} 
\textbf{Acknowledgments }We thank Walid Aref, Purdue University, for the discussion on the use of the WARP technique, and Roland Kays, NC Museum of Natural Sciences, for kindly providing the real data used in the experiments. 

\fontsize{10pt}{12pt}\selectfont





\section*{Appendix: the animal dataset}

\begin{longtable}{c c}
	\label{tab:table-appendix}
	\includegraphics[scale=0.3]{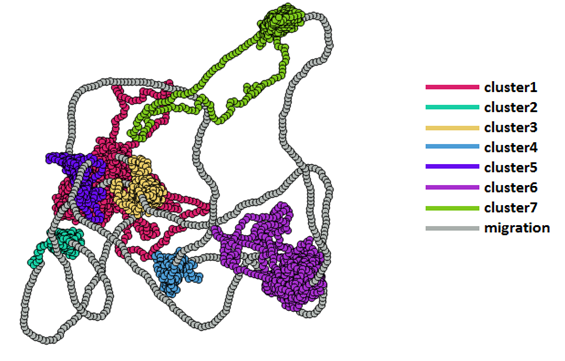} & \includegraphics[scale=0.3]{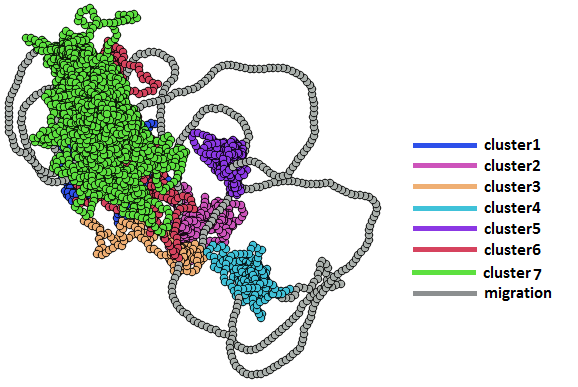}\\
	ind1 & ind6\\
	\includegraphics[scale=0.3]{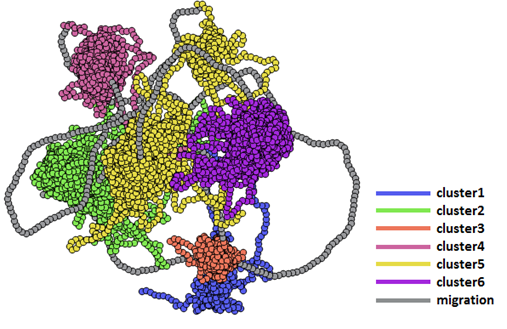} & \includegraphics[scale=0.3]{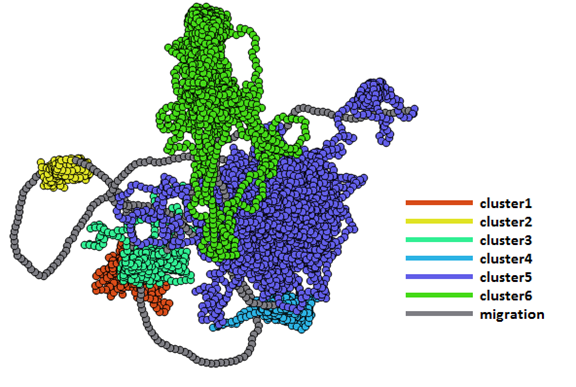}\\
	ind8 & ind10\\
	\includegraphics[scale=0.3]{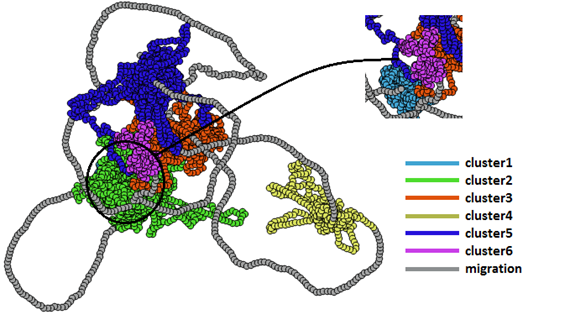} & \includegraphics[scale=0.3]{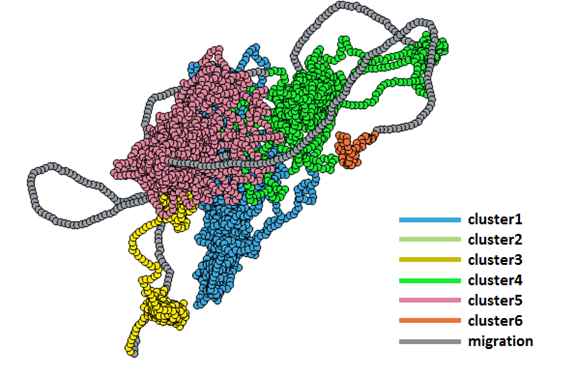}\\
	ind12 & ind14\\
	\includegraphics[scale=0.3]{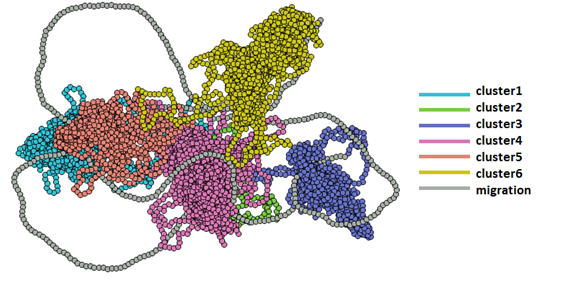} & \includegraphics[scale=0.3]{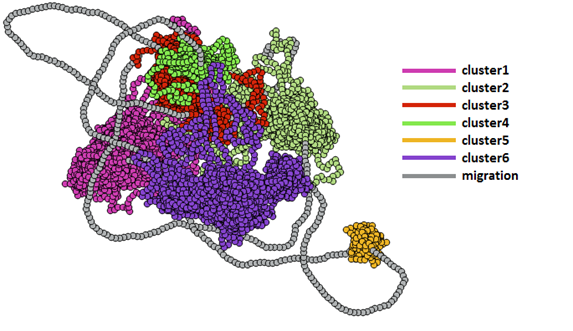}\\
	ind17 & ind25\\
	\includegraphics[scale=0.3]{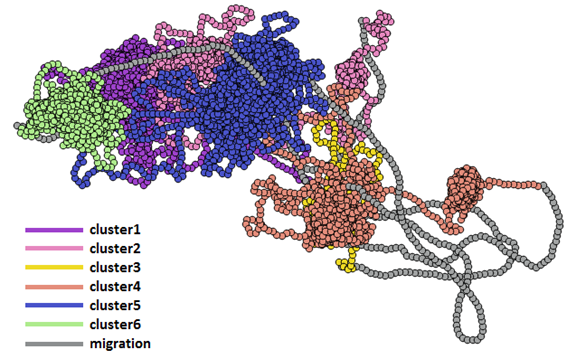} & \includegraphics[scale=0.3]{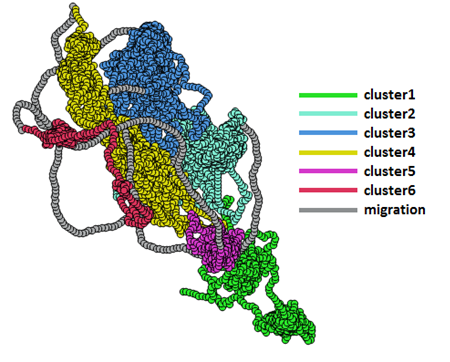}\\
	ind35 & ind39\\
	\includegraphics[scale=0.3]{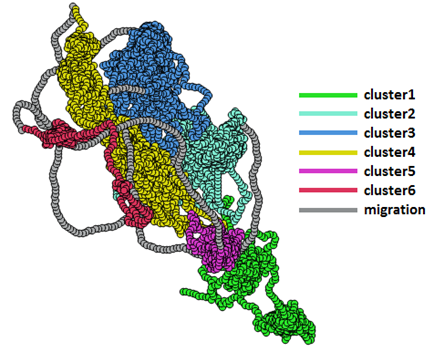} & \includegraphics[scale=0.3]{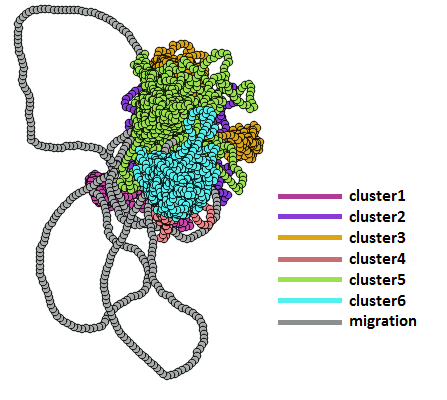}\\
	ind41 & ind49\\
	
\end{longtable}

\end{document}